\title{Clustering With Side Information: \\ From a Probabilistic Model to a Deterministic Algorithm}
\author{\name Daniel Khashabi  \email khashab2@illinois.edu \\
	\addr Department of Computer Science\\
	University of Illinois, Urbana-Champaign\\
	Urbana, IL 61801 USA \thanks{Authors have contributed equally to this work. }
	\AND
	\name  John Wieting \email wieting2@illinois.edu \\
	\addr Department of Computer Science\\
	University of Illinois, Urbana-Champaign\\
	Urbana, IL 61801 USA
	\AND
	\name  Jeffrey Yufei Liu \email liu105@illinois.edu \\
	\addr Google \\ 
	1600 Amphitheatre Parkway \\ 
	Mountain View CA, 94043
	\AND
	\name  Feng Liang \email liangf@illinois.edu \\
	\addr Department of Statistics\\
	University of Illinois, Urbana-Champaign\\
	Urbana, IL 61801 USA
}
\renewcommand\footnotemark{}
\begin{document} 
	\maketitle
	
\begin{abstract} 

In this paper, we propose a model-based clustering method (TVClust) that
robustly incorporates noisy side information as soft-constraints and aims
to seek a consensus between side information and the observed data. Our
method is based on a nonparametric Bayesian hierarchical model that
combines a probabilistic model for the data instances with one for the
side-information. An efficient Gibbs sampling algorithm is proposed for
posterior inference. Using the small-variance asymptotics of our
probabilistic model, we derive a new deterministic clustering
algorithm (RDP-means). It can be viewed as an extension of K-means that
allows for the inclusion of side information and has the additional
property that the number of clusters does not need to be specified a
priori. We compare our work with
many constrained clustering algorithms
from the literature on a variety of data sets and
conditions such as using noisy
side information and erroneous $k$ values. The results of our experiments
show strong results for our
probabilistic and deterministic approaches under these conditions when compared to other
algorithms in the literature.
\end{abstract} 
\begin{keywords}
	Constrained Clustering, Model-based methods, Two-view clustering,  Asymptotics, Non-parametric models. 
\end{keywords}

\section{Introduction}

We consider the problem of clustering with side information, focusing on the type of side information represented as pairwise cluster constraints between any two data instances. For example, when clustering genomics data, we could have prior knowledge on whether two proteins should be grouped together or not; when clustering pixels in an image, we would naturally impose spatial smoothness in the sense that nearby pixels are more likely to be clustered together. 

Side information has been shown to provide substantial improvement on clustering. For example, \cite{jin2013reinforced} showed that combining additional tags with image visual features offered substantial benefits to information retrieval and \cite{khoreva2014learning} showed that learning and combining additional knowledge (must-link constraints) offers substantial benefits to image segmentation. 

Despite the advantages of including side information, how to best incorporate it remains unresolved. Often the side-information in real applications can be noisy, as it is usually based on heuristic and inexact domain knowledge, and should not be treated as the ground truth which further complicates the problem.

In this paper, we approach incorporating side information from a new perspective. We model the observed data instances and the side information (or constraint) as two sources of data that are independently generated by a latent clustering structure - hence we call our probabilistic model TVClust (Two-View Clustering). Specifically, TVClust combines the mixture of Dirichlet Processes of the data instances and the random graph of constraints. We derive a Gibbs sampler for TVClust (Section 3). Furthermore, inspired by \cite{jiang2012small}, we scale the variance of the aforementioned probabilistic model to derive a deterministic model. This can be seen as a generalization of K-means to a nonparametric number of clusters that also uses side instance-level information (Section 4). Since it is based on the DP-means algorithm \citep{jiang2012small}, and it uses relational side information we call our final algorithm Relational DP-means (RDP-means). Lastly, experiments and results are presented (Section 5) in which we investigate the behavior of our algorithm in different settings and compare to existing work in the literature. 

\section{Related Work}
There has been a plethora of work that aims to enhance the performance of clustering via side information, either in deterministic or probabilistic settings. We refer the interested reader to existing comprehensive literature reviews of this subarea such as \cite{basu2008constrained}. 

\underline{\bf\emph{K-means with side information}}: Some of the earliest efforts to incorporate instance-level constraints for clustering were proposed by 
\cite{wagstaff2000clustering} and \cite{wagstaff2001constrained}. In these papers, both must-link and cannot-link constraints were considered in a modified K-means algorithm. A limitation of their work is that the side information must be treated as the ground-truth and is incorporated into the models as hard constraints. 

Other algorithms similar in nature to K-means have been proposed as well that incorporate soft constraints. These include MPCK-means \cite{bilenko2004integrating}, Constrained Vector Quantization Error (CVQE) \cite{pelleg2007k} and its variant Linear Constrained Quantization Error (LCVQE) \cite{pelleg2007k}.\footnote{These models are further studied in \cite{covoes2013study}.} Unlike these approaches, our algorithm is derived from using small variance asymptotics on our probabilistic model and therefore is derived in a more principled fashion. Moreover, our deterministic model doesn't require as input the goal number of clusters, as it determines this from the data.

\underline{\bf\emph{Probablistic clustering with side information}}:
Motivated by enforcing smoothness for image segmentation, \cite{orbanz2008nonparametric} proposed combining a Markov Random Field (MRF) prior with a nonparametric Bayesian clustering model. One issue with their approach is that by its nature, MRF can only handle must-links but not cannot links. In contrast, our model, which is also based on a nonparametric Bayesian clustering model, can handle both types of constraints. 

\underline{\bf\emph{Spectral clustering with side information}}: Following the long tail of works on spectral clustering techniques (e.g. \cite{ng2002spectral,shi2000normalized}), they're some works using these techniques with side information, mostly differing by how the Laplacian matrix is constructed or by various relaxations of the objective functions. These works include Constrained Spectral Clustering (CSR) \cite{wang2010flexible} and Constrained 1-Spectral Clustering (C1-SC) \cite{rangapuram2012constrained}. 

\underline{\bf\emph{Supervised clustering}}: There has been considerable interest in {\it supervised clustering}, where there is a labeling for all instances \cite{finley2005supervised,zhu2011infinite} and the goal is to create uniform clusters with all instances of a particular class. In our work, we aim to use side cues to improve the quality of clustering, making full labeling unnecessary as we can also make use of partial and/or noisy labels.

\underline{\bf\emph{Non-parametric K-means}}: There has been recent work that bridges the gap between probabilistic clustering algorithms and deterministic algorithms. The work by \cite{kulis2011revisiting} and \cite{jiang2012small} show that by properly scaling the distributions of the components, one can derive an algorithm that is very similar to K-means but without requiring knowledge of the number of clusters, $k$. Instead, it requires another parameter $\lambda$, but DP-means is much less sensitive to this parameter than K-means is to $k$. We use a similar technique to derive our proposed algorithm, RDP-means.

\section{A Nonparametric Bayesian Model}
In this section, we introduce our probabilistic model based on multi-view learning \citep{Blum:1998}. In multi-view learning, the datas consists of multiple views (independent sources of information). In our approach we consider the following two views: 
\begin{enumerate}
	\item A set of observations $\left\lbrace \mathbf{x}_i \in \mathbb{R}^{p}\right\rbrace_{i=1}^{n}$. 
	\item  The side information, between pairs of points, indicating how likely or unlikely two points are to appear in the same cluster. The side information is represented by a symmetric $n \times n$ matrix $E$: if \emph{a priori} $\mathbf{x}_i$ and $\mathbf{x}_j$ are believed to belong to the same cluster, then $E_{ij}=1$. If they are believed to be in different clusters, $E_{ij}=0$. Otherwise, if there is no side information about the pair $(i, j)$, we denote it with $E_{ij}=\text{NULL}$. For future reference, denote the set of side information as $\mathcal{C} = \left\lbrace (i, j) : E_{ij} \neq \text{NULL} \right\rbrace$. 
\end{enumerate}


We refer to our data, $\mathbf{x}_{1:n}$ and $E$, as two different \emph{views} of the underlying clustering structure. It is worth noting that either view is sufficient for clustering with existing algorithms. Given only the data instances $\mathbf{x}_i$'s,  it is the familiar clustering task where many methods such as K-means, model-based clustering \citep{fraley2002model} and DPM can be applied. Given the side information $E$, many graph-based clustering algorithms, such as normalized graph-cut \citep{shi2000normalized} and spectral clustering \citep{ng2002spectral} can be applied. 

Our approach tries to aggregate information from the two views through a Bayesian framework and reach a consensus about the cluster structure. Given the latent clustering structure, data from the two views is modeled independently by two generative models: $\mathbf{x}_{1:n}$ is modeled by a Dirichlet Process Mixture (DPM) model \citep{antoniak1974mixtures,ferguson1973bayesian} and $E$ is modeled by a random graph \citep{erdos59}. 

Aggregating the two views of $\mathbf{x}_{1:n}$ and $E$ is particularly useful when neither view
can be fully trusted. While previous work such as constrained K-means or constrained EM assume and rely on constraint exactness, TVClust uses $E$ in a ``soft" manner and is more robust to errors. We can call $E_{ij}=1$ a {\it may} link and $E_{ij}=0$ a {\it may-not} link, in contrast with the aforementioned must-link and cannot-link, to emphasize that our model tolerates noise in the side information.



\subsection{Model for Data Instances}
We use the Mixture of Dirichlet Processes as the underlying clustering model for the data instances  $\{ \mathbf{x}_i\}_{i=1}^{n}$. Let $\theta_i$ denote the model parameter associated with observation $\mathbf{x}_i$, which is modeled as an iid sample from a random distribution $G$. A Dirichlet Process \textsf{DP}($\alpha, G_0$) is used as the prior for $G$:
\begin{equation} \label{eq:DPprior}
\theta_1, \dots, \theta_n | G \overset{\text{iid}}{\sim} G, \quad G \sim \text{\textsf{DP}}(\alpha, G_0). 
\end{equation}
Denote the collection $(\theta_1, \dots, \theta_{i-1}, \theta_{i+1}, \dots, \theta_n)$  by $\theta_{\setminus i}$. With prior specification (\ref{eq:DPprior}), the distribution of $\theta_i$ given $\theta_{\setminus i}$ (after integrating out $G$) can be found following the Balckwell-MacQueen urn scheme \citep{blackwellm73}:
\begin{equation} \label{prior:urn}
p(\theta_i | \theta_{\setminus i}) \propto \sum_{k=1}^K n_{-i, k} \delta_{\theta_k^*} (\theta_i) + \alpha G_0 (\theta_i), 
\end{equation}
where we assume there are $K$ unique values among $\theta_{\setminus i}$, denoted by $\theta^*_1, \dots, \theta^*_K$, $\delta_{\theta^*_k}(\cdot)$ is the Kronecker delta function, and $n_{-i,k}$ is the number of instances accumulated in cluster $k$ excluding instance $i$. From (\ref{prior:urn}) we can see a natural clustering effect in the sense that with a positive probability, $\theta_i$ will take an existing value from $\theta_1^*, \dots, \theta_K^*$, i.e. it will join one of the $K$ clusters. This effect can be interpreted using the Chinese Restaurant Process (CRP) metaphor \cite{aldous1983random}, where assigning $\theta_i$ to a cluster is analogous to a new customer choosing a table in a Chinese restaurant. The customer can join an already occupied table or start a new one.

Given $\theta_i$, we use a parametric family $p(\mathbf{x}_i| \theta_i)$ to model the instance $\mathbf{x}_i.$ In this paper, we focus on exponential families:
\begin{equation}
\label{eq:exponential:family:likelihood}
p(\mathbf{x}| \theta) = \exp \left(  \langle T(\mathbf{x}), \theta \rangle - \psi(\theta) - h(\mathbf{x}) \right), 
\end{equation} 
where $\psi(\theta) = \log \int \exp \left(  \langle T(\mathbf{x}), \theta \rangle  - h(\mathbf{x}) \right)  d \mathbf{x} $ is the log-partition function (cumulant generating function) and $T(\mathbf{x})$ is the vector of sufficient statistics, given input point $\mathbf{x}$. To simplify the exposition, we assume that $\mathbf{x}$ is the augmented vector of sufficient statistics given an input point, and simplify (\ref{eq:exponential:family:likelihood}) by removing $T(\cdot)$: 
\begin{equation}
\label{eq:exponential:family:likelihood:simplified}
p(\mathbf{x}| \theta) = \exp \left(  \langle \mathbf{x}, \theta \rangle - \psi(\theta) - h(\mathbf{x}) \right) . 
\end{equation} 
It is easy to show that for this formulation, 
\begin{align}
	\mathbb{E}_p\left[ \mathbf{x} \right] = \nabla_\theta \psi(\theta),  \label{eq:mean:1:exponentialfamily} \\ 
	\text{Cov}_p\left[ \mathbf{x} \right] = \nabla^2_\theta \psi(\theta).  \label{eq:covariance:1:exponentialfamily}
\end{align}
For convenience, we choose the base measure $G_0$, in  \textsf{DP}($\alpha, G_0$)  from the conjugate family, which takes the following form: 
\begin{equation}
\label{eq:exponential:family:prior}
dG_0(\theta|  \boldsymbol{\tau}, \eta ) = \exp \left(  \langle \theta, \boldsymbol{\tau} \rangle - \eta \psi(\theta) - m(\boldsymbol{\tau}, \eta) \right), 
\end{equation} 
where $\boldsymbol{\tau}$ and $\eta $ are parameters of the prior distribution. 
Given these definitions of the likelihood and conjugate prior, the posterior distribution over $\theta$ is an exponential family distribution of the same form as the prior distribution, 
but with scaled parameters $\boldsymbol{\tau} + \mathbf{x}$ and $\eta + 1$. 

Exponential families contain many popular distributions used in practice. For example, Gaussian families are often used to model real valued points in $\mathbb{R}^p$, which correspond to $T(\mathbf{x}) = [ \mathbf{x}, \; \mathbf{x}^T \mathbf{x}  ]^T$, and $\theta = (\mu, \Sigma)$, where $\mu$ is the mean vector, and $\Sigma$ is the covariance matrix. The base measure often chosen for Gaussian families is its conjugate prior, the Normal-Inverse-Wishart distribution. Another popular parametric family, the multinomial distribution, is often used to model word counts in text mining or histograms in image segmentation. This distribution corresponds to $T(\mathbf{x}) = \mathbf{x}$ and the base measure is often chosen to be a Dirichlet distribution, its conjugate prior.  

\subsection{Model for Side Information}
Given $\theta_{1:n} = (\theta_1, \dots, \theta_n)$, we can summarize the clustering structure by a matrix $H_{n \times n}$ where $H_{ij} = \delta_{\theta_i}(\theta_j).$ Note that $H$ should not be confused with $E$. $E$ represents the side information and can be viewed as a random realization based on the true clustering structure $H$. We want to infer $H$ based on $E$ and $\mathbf{x}_{1:n}.$

We model $E$ using the following  generative process: with probability $p$ an existing edge of $H$ is preserved in $E$, and with probability $q$ a false edge (of $H$) is added to $E$, i.e. for any $(i,j) \in \mathcal{C}$:
$$
\begin{cases}
p(E_{ij} = 0 | H_{ij}=1) = p, \\ 
p(E_{ij} = 1| H_{ij} =0) = 1 - p, \\ 
p(E_{ij} = 0 | H_{ij}=0) = q, \\
p(E_{ij} = 1| H_{ij}=0) = 1 - q. 
\end{cases}
$$
or more concisely, 
\begin{equation}
\label{eq:constraint_i_j}
p(E_{ij} | H_{ij},  p, q) = p^{E_{ij} H_{ij} }   (1- p)^{(1-E_{ij}) H_{ij} } q^{(1-E_{ij}) (1-H_{ij}) } (1-q)^{E_{ij} (1-H_{ij}) }. 
\end{equation}
The values $p$ and $q$ represent the credibility of the values in the matrix $E$, while the values $1-p$ and $1-q$ are error probabilities. One may be able to set the value for $(p,q)$ based on expert knowledge
or learn them from the data in a fully Bayesian approach by adding another layer of priors over $p$ and $q$, 
\[ p \sim \textsf{Beta}(\alpha_p, \beta_p), \quad q \sim  \textsf{Beta}(\alpha_q, \beta_q). 
\]

\subsection{Posterior Inference via Gibbs Sampling}
\label{sec:inference}
A graphical  representation of our model TVClust is shown in Figure \ref{fig:graphicalModels}. Based on the parameters $\theta_1, \dots, \theta_n$,  the full data likelihood is
$$
p(\mathbf{x}_{1:n}, E | \theta_{1:n}) = \prod_{i=1}^{n} p(\mathbf{x}_i| \theta_i) \prod_{1 \leq i < j \leq n}^{n} p(E_{ij}| \theta_{1:n}, p, q), 
$$
where  $p(E_{ij} = \text{NULL} | \theta_{1:n}, p, q)=1,$ i.e. no side information is provided for pair $(i,j)$. 
\begin{figure}
	\centering
	\includegraphics[trim=0cm 18.2cm 13cm 1.3cm, clip=true, scale=0.63]{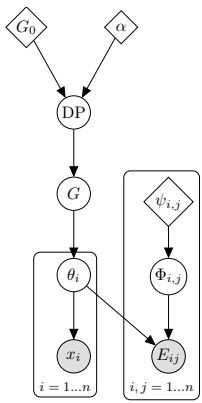}    
	\caption{Graphical representation of TVClust. The data generating process for the data instances is on the left and the process for the side information is on the right. }
	\label{fig:graphicalModels}
\end{figure}

A Gibbs sampling scheme can be derived for our TVClust model, which is in spirit similar to other Gibbs samplers for DPM (see the comprehensive review at \cite{neal2000markov}). 
The Gibbs sampler involves iteratively sampling from the full conditional distribution of each unknown parameter given other parameters and the data. The key step is the sampling of $p(\theta_i | \theta_{\setminus i}, \mathbf{x}_i, E, p, q)$. Using the independence between variables (see the graphical model in Figure \ref{fig:graphicalModels}), we have
\begin{equation}
\label{eq:theposterior:1}
p(\theta_i | \theta_{\setminus i},  \mathbf{x}_{1:n},  E, p, q) \propto  p(\mathbf{x}_i | \theta_i) p(E_i | \theta_i, \theta_{\setminus i}, p, q) p(\theta_i | \theta_{\setminus i}), 
\end{equation}
where we use $E_i = \left\lbrace E_{ij} : i \neq j \right\rbrace $ to denote the set of side information related to data instance $i$. Following the Blackwell-MacQueeen urn presentation of the prior (\ref{prior:urn}), we have
\begin{align}
	p(\theta_i | \theta_{\setminus i}, \mathbf{x}_{1:n}, E, p, q) & 
	\propto  \sum_{k=1}^{K} n_{-i, k} p(\mathbf{x}_i | \theta^*_k) p(E_i | \theta_k^* , \theta_{\setminus i}, p, q)\delta_{\theta^*_k}(\theta_i)    \nonumber \\
	& \quad  + \alpha  p(\mathbf{x}_i | \theta_i) p(E_i  | \theta_i, \theta_{\setminus i}, p, q) G_0(\theta_i). 
	\label{eq:sampling:the:whole:thing}
\end{align}

The full conditional of $\theta_i$ given others is a mixture of a discrete distribution with point masses located at $\theta^*_{1}, \dots, \theta^*_K$ and a continuous component. Sampling from the discrete component only involves evaluation of the likelihood function of $\mathbf{x}_i$ and $E_i$, which can be easily computed. Now we focus on sampling $\theta_i$ from the continuous component. First observe that when $\theta_i$ is sampled from this continuous component, we have $H_{ij}=\delta_{\theta_i}(\theta_j)=0$ for all $j \ne i$, therefore:
\[
p(E_i | \theta_i,  \theta_{\setminus i}) = \prod_{j \ne i} q^{E_{ij}} (1-q)^{(1-E_{ij})}, 
\]
which does not depend on the actual value of $\theta_i$. Also note that
\[
p(\mathbf{x}_i | \theta_i ) G_0(\theta_i) = p_{G_0}(\theta_i | \mathbf{x}_i) p_{G_0}(\mathbf{x}_i), 
\]
where the subscript $G_0$ is used to emphasize that the posterior  and the marginal  distributions, $p_{G_0}(\theta_i | \mathbf{x}_i)$ and  $p_{G_0}(\mathbf{x}_i)$, are calculated with respect to prior $G_0$. Then we can rewrite the sampling distribution for the continuous component as
\[ \alpha  p(\mathbf{x}_i | \theta_i) p(E_i  | \theta_i, \theta_{\setminus i}, p, q) G_0(\theta_i) \propto \alpha p_{G_0}(\mathbf{x}_i) \Big [   \prod_{j \ne i} q^{E_{ij}} (1-q)^{(1-E_{ij})} \Big ] p_{G_0}(\theta_i | \mathbf{x}_i). \]

Finally we can simplify the sampling distribution (\ref{eq:sampling:the:whole:thing}) as
\begin{align}
	p(\theta_i | \theta_{\setminus i}, \mathbf{x}_i, E, p, q) \nonumber \\
	\propto   \sum_{k=1}^{K} &n_{-i, k} p(\mathbf{x}_i | \theta^*_k) \delta_{\theta^*_k}(\theta_i) \left( \frac{p}{1-q} \right)^{f_k^i}  \left( \frac{1-p}{q} \right)^{s_k^i}  + \alpha  p_{G_0}(\mathbf{x}_i) p_{G_0}( \theta_i | \mathbf{x}_i),
	\label{eq:final:s}
\end{align}
where 
\begin{equation}
\begin{cases}
f_k^i &= \# \lbrace j: \theta_j = \theta^*_k, E_{ij}=1 \rbrace, \\
s_k^i &= \# \lbrace j: \theta_j = \theta^*_k, E_{ij}=0 \rbrace. 
\end{cases}
\label{eq:setOfFriends:and:Strangers:1}
\end{equation}

Using the analogy of the Chinese Restaurant interpretation of DPM \citep{aldous1983random}, we can interpret the sampling distribution of $\theta_i$ in the following way. Let instance $i$  be a {\it friend} of instance $j$, if $E_{ij} = 1$. Similarly two instances are {\it strangers}, if $E_{ij}=0$. So $f^k_i$ is the number of friends of instance $i$ at table $k$, and $s^k_i$ is the number of strangers for $i$ at table $k$.

As mentioned before, the values $p$ and $q$ represent the credibility of side information. For a reasonable confidence over constraints usually $p > 1- q$. Then by (\ref{eq:final:s}), the chance of a person assigned to a table not only increases with the popularity of the table (i.e. the table size $n_{-i, k}$) like in the original DPM, but also increases with their friend count $f_k^i$ and decreases with their stranger count $s_k^i$.

Instead of sequentially updating the \textit{point-specific} parameters $(\theta_1, \dots, \theta_n)$, one can sequentially update an equivalent parameter set: the set of \textit{cluster-specific} parameters $(\theta_i^*, \dots, \theta^*_K)$ and the cluster assignment indicators $(z_1, \dots, z_n)$, where  $z_i \in \lbrace 1, \hdots, K \rbrace $ indicates the cluster assignment for instance $i$, i.e., 
$\theta_i = \theta^*_{z_i}$. By our derivation at (\ref{eq:final:s}), we can update $z_i$'s sequentially as
\begin{equation}
\label{eq:sampling}
\hspace{-0.5cm}
\begin{cases}
p(z_i = k  )  & \propto   n_{-i, k} p(\mathbf{x}_i | \theta^*_k) \left( \frac{p}{1-q} \right)^{f_k^i}  \left( \frac{1-p}{q} \right)^{s_k^i},   \\  
p(z_i = k_{\text{new}}) & \propto    \alpha  \int p(\mathbf{x}_i | \theta^*_k) dG_0. 
\end{cases}
\end{equation}
The cluster parameters $(\theta_i^*, \dots, \theta^*_K)$, given the partition $z_{1:n}$ and the data $\mathbf{x}_{1:n}$, can be updated similarly as they were in Algorithm 2 of \cite{neal2000markov}. 

\section{RDP-means: A Deterministic Algorithm}
In this section, we apply the scaling trick as described in \cite{jiang2012small} to transform the Gibbs sampler to a deterministic algorithm, which we refer to as RDP-mean. 

\subsection{Reparameterization of the exponential family using Bregman divergence}
We bring in the notion of Bregman divergence and its connection to the exponential family. Our starting point is the formal definition of the Bregman divergence. 
\begin{definition}[\citep{bregman1967relaxation}]
	Define a strictly convex function $\phi:\mathcal{S} \rightarrow \mathbb{R}$, such that the domain $\mathcal{S} \subseteq \mathbb{R}^p$ is a convex set, and $\phi$ is differentiable on $\text{ri}(\mathcal{S})$, the relative interior of $\mathcal{S}$, where its gradient $\nabla \phi$ exists.   
	Given two points $\mathbf{x}, \mathbf{y} \in \mathbb{R}^p$, the Bregman divergence $D_{\phi}(\mathbf{x}, \mathbf{y}): \mathcal{S} \times \text{ri}(\mathcal{S}) \rightarrow [0, +\infty) $ is defined as: 
	$$
	D_{\phi}(\mathbf{x}, \mathbf{y}) = \phi(\mathbf{x}) - \phi(\mathbf{y}) - \langle \mathbf{x} - \mathbf{y}, \nabla \phi(\mathbf{y}) \rangle.
	$$
\end{definition}

The Bregman divergence is a general class of distance measures. For instance, with a squared function $\phi$, Bregman divergence is equivalent to Euclidean distance (See Table 1 in \cite{banerjee2005clustering} for other cases). 

\cite{forster2002relative} showed that there exists a bijection between exponential families and Bregman divergences. Given this connection, \cite{banerjee2005clustering} derived a K-means type algorithm for fitting a probabilistic mixture model (with fixed number of components) using Bregman divergence, rather than the Euclidean distance.  

\begin{definition}[Legendre Conjugate]
	For a function $\psi(.)$ defined over $\mathbb{R}^p$, define its convex conjugate $\psi^*(.)$ as, 
	$
	\psi^*(\boldsymbol{\mu}) = \sup_{ \theta \in \text{dom}(\psi) } \left\lbrace  \langle \boldsymbol{\mu}, \theta \rangle - \psi(\theta) \right\rbrace.
	$
	In addition, if the function $\psi(\theta)$ is closed and convex, $\left( \psi^*\right)^* = \psi$. 
\end{definition}

It can be shown that the log-partition function of the exponential families of distributions is a closed convex function (see Lemma 1 of \cite{banerjee2005clustering}). Therefore there is a bijection between the conjugate parameter $\boldsymbol{\mu}$ of the Legendre conjugate $\psi^*(\cdot)$, and the parameter of the exponential family, $\theta$, in the log-partition function defined for the exponential family at (\ref{eq:exponential:family:likelihood}). With this bijection, we can rewrite the likelihood (\ref{eq:exponential:family:likelihood:simplified}) using the Bregman divergence and the Legendere conjugate: 
\begin{equation}
\label{eq:priorAndLikelihood}
p(\mathbf{x}| \theta) = p(\mathbf{x}| \boldsymbol{\mu}) = \exp \left( - D_{\psi^*}( \mathbf{x}, \boldsymbol{\mu}) \right) f_{\psi^*}(\mathbf{x}),  
\end{equation}
where $f_{\psi^*}(\mathbf{x}) = \exp \left( \psi^*(\mathbf{x}) - h(\mathbf{x})  \right)$. The left side of (\ref{eq:priorAndLikelihood}) is written as $p(\mathbf{x}| \theta) = p(\mathbf{x}| \boldsymbol{\mu})$ to stress that conditioning on $\theta$ is equivalent to conditioning on $\boldsymbol{\mu}$, since there is a bijection between them and the right side of (\ref{eq:priorAndLikelihood}) is essentially the same as (\ref{eq:exponential:family:likelihood:simplified}). A nice intuition about this reparameterization is that now the likelihood of any data point $\mathbf{x}$ is related to how {\it far} it is from the cluster components parameters $\boldsymbol{\mu}$, where the distance is measured using the Bregman divergence $D_{\psi^*}( \mathbf{x}, \boldsymbol{\mu})$. 

Similarly we can rewrite the prior (\ref{eq:exponential:family:prior}) in terms of the Bregman divergence and the Legendere conjugate:
\begin{equation}
\label{eq:priorAndLikelihood:onlyprior}
p(\theta|  \boldsymbol{\tau}, \eta )= p(\boldsymbol{\mu}|  \boldsymbol{\tau}, \eta ) = \exp \left( - \eta D_{\psi^*}(\frac{\mathbf{\tau}}{\eta}, \boldsymbol{\mu}) \right) g_{\psi^*}(\boldsymbol{\tau}, \eta),
\end{equation}
where 
$
g_{\psi^*}(\boldsymbol{\tau}, \eta) = \exp \left( \eta \psi (\theta) - m(\tau, \eta)  \right).
$

\subsection{Scaling the Distributions}
\begin{lemma}[\cite{jiang2012small}]
	\label{lemma:scaling}
	Given the exponential family distribution (\ref{eq:exponential:family:likelihood:simplified}), define another probability distribution with parameter $\tilde{\theta}$, and log-partition function $\tilde{\psi}(.)$, where $\tilde{\theta} = \gamma \theta $, and $\tilde{\psi}(\tilde{\theta}) = \gamma \psi(\tilde{\theta} / \gamma)$, then: 
	\begin{enumerate}
		\item The scaled probability distribution $\tilde{p}(.)$ defined with parameter vector $\tilde{\theta}$, and log-partition function $\tilde{\psi}(.)$, is a proper probability distribution and belongs to the exponential family.
		\item The mean and variance of the probability distribution $\tilde{p}(.)$ are: 
		\begin{equation*}
			\mathbb{E}_{\tilde{p}}(\mathbf{x}) = \mathbb{E}_{p}(\mathbf{x}), \quad \text{Cov}_{\tilde{p}}(\mathbf{x}) = \frac{1}{\gamma}\text{Cov}_{p}(\mathbf{x}). 
		\end{equation*}
		\item The Legendre conjugate of $\tilde{\psi}(.)$\footnote{$\left[ \tilde{\psi}(.) \right]^*$ the conjugate of $ \tilde{\psi}(.)$, is denoted with $ {\tilde{\psi}}^*(.)$ for simplicity.} is:
		$$
		{\tilde{\psi}}^*(\tilde{\theta}) = \gamma \psi^*(\tilde{\theta}). 
		$$
	\end{enumerate}
\end{lemma}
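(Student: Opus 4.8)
The plan is to settle the three claims in order of increasing difficulty, treating the validity assertion (part~1) as the real content and obtaining parts~2 and~3 from it by routine calculus and convex analysis. The single structural fact I would record at the outset is that, as the log-partition function of an exponential family, $\psi$ is a closed convex function, differentiable on the interior of its natural-parameter domain, for which the moment identities (\ref{eq:mean:1:exponentialfamily})--(\ref{eq:covariance:1:exponentialfamily}) hold (this is Lemma~1 of \cite{banerjee2005clustering}). Since $\tilde{\psi}(\tilde{\theta})=\gamma\psi(\tilde{\theta}/\gamma)$ is $\psi$ precomposed with the linear map $\tilde{\theta}\mapsto\tilde{\theta}/\gamma$ and scaled by $\gamma>0$, it is immediately closed and convex as well; the only genuinely nontrivial point will be whether it is the log-partition function of an honest probability measure.

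For part~3 I would simply unfold the Legendre conjugate. Writing $\tilde{\psi}^{*}(\boldsymbol{\mu})=\sup_{\tilde{\theta}}\{\langle\boldsymbol{\mu},\tilde{\theta}\rangle-\tilde{\psi}(\tilde{\theta})\}$ and substituting $\tilde{\theta}=\gamma\theta$ together with $\tilde{\psi}(\tilde{\theta})=\gamma\psi(\theta)$, the positive factor $\gamma$ pulls out of the supremum, yielding $\tilde{\psi}^{*}(\boldsymbol{\mu})=\gamma\,\psi^{*}(\boldsymbol{\mu})$, the asserted identity. I would note explicitly that the conjugate is evaluated at the common dual (mean) variable, the linear reparametrization acting only inside the supremum, so no shift of the evaluation point occurs. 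There is no obstacle here beyond bookkeeping.

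For part~2 I would take part~1 for granted, so that the moment formulas apply to $\tilde{p}$ with log-partition $\tilde{\psi}$, giving $\mathbb{E}_{\tilde{p}}[\mathbf{x}]=\nabla_{\tilde{\theta}}\tilde{\psi}(\tilde{\theta})$ and $\text{Cov}_{\tilde{p}}[\mathbf{x}]=\nabla^{2}_{\tilde{\theta}}\tilde{\psi}(\tilde{\theta})$. Differentiating $\tilde{\psi}(\tilde{\theta})=\gamma\psi(\tilde{\theta}/\gamma)$ by the chain rule, the factors $\gamma$ and $1/\gamma$ cancel in the gradient, so $\nabla_{\tilde{\theta}}\tilde{\psi}(\tilde{\theta})=\nabla\psi(\theta)=\mathbb{E}_{p}[\mathbf{x}]$; one further derivative leaves a surviving factor $1/\gamma$, so $\nabla^{2}_{\tilde{\theta}}\tilde{\psi}(\tilde{\theta})=\tfrac{1}{\gamma}\nabla^{2}\psi(\theta)=\tfrac{1}{\gamma}\text{Cov}_{p}[\mathbf{x}]$. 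These are exactly the claimed mean and covariance.

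The hard part is part~1: exhibiting a base measure $\tilde{h}$ with $e^{\tilde{\psi}(\tilde{\theta})}=\int e^{\langle\mathbf{x},\tilde{\theta}\rangle-\tilde{h}(\mathbf{x})}\,d\mathbf{x}$, since convexity of $\tilde{\psi}$ is necessary but not sufficient, and the naive guess $\tilde{h}=\gamma h+\text{const}$ fails outside special cases. I see two routes. The clean probabilistic route, valid when $\gamma$ is a positive integer, is to recognize the measure whose Laplace transform is $e^{\gamma\psi(\cdot/\gamma)}$ as the law of the sample average $\bar{\mathbf{x}}=\tfrac{1}{\gamma}\sum_{i=1}^{\gamma}\mathbf{x}_{i}$ of $\gamma$ i.i.d.\ draws from $p(\cdot\,|\theta)$: convolving $\gamma$ copies multiplies the partition function to $e^{\gamma\psi(\theta)}$, the rescaling by $1/\gamma$ turns the natural parameter into $\tilde{\theta}=\gamma\theta$, and one reads off directly that $\bar{\mathbf{x}}$ lies in the exponential family with log-partition $\tilde{\psi}$ and carrier equal to minus the log of the rescaled $\gamma$-fold convolution of $e^{-h}$. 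This route simultaneously re-proves part~2, since averaging preserves the mean and divides the covariance by $\gamma$. The analytic route, which also formally covers non-integer $\gamma$, is to pass to the Bregman form (\ref{eq:priorAndLikelihood}) and use part~3: because $\tilde{\psi}^{*}=\gamma\psi^{*}$, the divergence scales as $D_{\tilde{\psi}^{*}}(\mathbf{x},\boldsymbol{\mu})=\gamma D_{\psi^{*}}(\mathbf{x},\boldsymbol{\mu})$, so $\tilde{p}(\mathbf{x}\,|\boldsymbol{\mu})=\exp(-\gamma D_{\psi^{*}}(\mathbf{x},\boldsymbol{\mu}))\,f_{\tilde{\psi}^{*}}(\mathbf{x})$ with $f_{\tilde{\psi}^{*}}$ fixed by normalization. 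The subtlety to watch in this second route is the \emph{existence of a finite normalizer} on the scaled natural-parameter domain, which is precisely where infinite divisibility of the original family would have to be invoked for arbitrary real $\gamma$; I expect this existence question, rather than any algebraic manipulation, to be the genuine crux of the lemma.
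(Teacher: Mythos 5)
The paper does not actually prove this lemma: it is imported verbatim by citation to \cite{jiang2012small}, so there is no in-paper argument to compare yours against line by line. Judged on its own terms, your proposal is correct and, if anything, more careful than the source. Your conjugate computation for part~3 (pulling the positive factor $\gamma$ out of the supremum after the substitution $\tilde{\theta}=\gamma\theta$) and your chain-rule derivation of part~2 from the moment identities $\mathbb{E}_{\tilde p}[\mathbf{x}]=\nabla_{\tilde\theta}\tilde\psi(\tilde\theta)$, $\mathrm{Cov}_{\tilde p}[\mathbf{x}]=\nabla^2_{\tilde\theta}\tilde\psi(\tilde\theta)$ are exactly the standard route and are both right; you are also right that the paper's displayed identity ${\tilde\psi}^*(\tilde\theta)=\gamma\psi^*(\tilde\theta)$ is notationally sloppy and should be read at the dual (mean) argument, as you do. The one place where you go beyond both this paper and its source is part~1: the observation that convexity of $\gamma\psi(\cdot/\gamma)$ does not by itself make it a log-partition function, and that a finite normalizer (equivalently, infinite divisibility of the family for non-integer $\gamma$) is the real content, is a genuine gap in the literature's treatment that your i.i.d.-average construction cleanly resolves for integer $\gamma$. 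That restriction is harmless for the way the lemma is used here --- the small-variance asymptotics only require $\gamma\to\infty$ along some sequence, so taking $\gamma\in\mathbb{N}$ suffices --- but it would be worth stating that explicitly rather than leaving the general-$\gamma$ case as an open normalization question.
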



The implication of Lemma \ref{lemma:scaling} is that the covariance $\text{Cov}_{p}(\boldsymbol{x})$ scales with $1/\gamma$, which is close to zero when $\gamma$ is large, but the mean $\mathbb{E}_{p}(\boldsymbol{x})$ remains the same. Thus we can obtain a deterministic algorithm when $\gamma$ goes to infinity. 

With the scaling trick, the scaled prior and scaled likelihood can be written as:

\begin{equation}
\label{eq:priorAndLikelihoodScaled}
\begin{cases}
\tilde{p}(\mathbf{x}| {\theta}, \gamma) = \tilde{p}(\mathbf{x}| {\boldsymbol{\mu}}, \gamma) = \exp \left( -  \gamma D_{\psi^*}( \mathbf{x},  \boldsymbol{\mu}) \right) f_{\gamma \psi^*}(\mathbf{x})   \\ 
\tilde{p}({\theta}|  \boldsymbol{\tau}, \eta, \gamma) = \tilde{p}(\tilde{\boldsymbol{\mu}}|  \boldsymbol{\tau}, \eta, \gamma ) = \exp \left( - \eta D_{\psi^*}(\frac{\mathbf{\tau}}{\eta}, \boldsymbol{\mu}) \right) g_{{\gamma {\psi}}^*}(\boldsymbol{\tau}/\gamma, \eta/\gamma)
\end{cases}
\end{equation}

\subsection{Asymptotics of TVclust}
Using the scaling distributions  (\ref{eq:priorAndLikelihoodScaled}) we can write the Gibbs update (\ref{eq:sampling}) in the following form: 

\begin{equation}
\label{eq:modfiedUpdatedSimple}
\begin{cases}
p(z_i = k) & \propto  \displaystyle  n_{-i, k}  \exp \left( -\gamma D_{\psi^*}(\mathbf{x}_i, \boldsymbol{\mu}_k)  \right)  \left( \frac{p}{1-q} \right)^{f_k^i}  \left( \frac{1-p}{q} \right)^{s_k^i}   \\  
p(z_i = k_{\text{new}}) & \propto \displaystyle \alpha   \int \tilde{p}(\mathbf{x}_i| {\theta}) \tilde{p}({\theta}|  \boldsymbol{\tau}, \eta ) d \theta   \end{cases}
\end{equation}
Following \cite{jiang2012small}, we can approximate the integral $ I = \int \tilde{p}(\mathbf{x}| {\theta}) \tilde{p}({\theta}|  \boldsymbol{\tau}, \eta ) d \theta$ using the Laplace approximation \citep{tierney1986accurate}: 
\begin{align*}
	\tilde{p}(\mathbf{x} | \boldsymbol{\tau}, \eta, \gamma)\approx 
	& 
	g_{{\gamma {\psi}}^*}(\boldsymbol{\tau}/\gamma, \eta/\gamma)  \exp \left( -\gamma \phi(\mathbf{x}) - \eta \phi(\tau/\eta) - (\gamma+\eta) \phi( \frac{ \gamma \mathbf{x} + \tau }{ \gamma + \eta } ) \right) \gamma^d \text{Cov}\left(\frac{\gamma \mathbf{x}+\boldsymbol{\tau}}{\gamma + \boldsymbol{\tau}}\right). 
\end{align*}
We can write the resulting expression as a product of a function of the parameters and a function of the input observations:
$$
\tilde{p}(\mathbf{x} | \boldsymbol{\tau}, \eta, \gamma) \approx \kappa(\boldsymbol{\tau}, \eta, \gamma) \times \nu(\mathbf{x};\boldsymbol{\tau}, \eta, \gamma)
$$
The concentration parameter of the DPM, $\alpha$ in (\ref{eq:constraint_i_j}), is usually tuned by user. To get the desired result, we choose it to be:  
$$
\alpha = \kappa(\boldsymbol{\tau}, \eta, \gamma)^{-1} \exp(\lambda \gamma), 
$$
where $\lambda$ is a new parameter introduced for the model. In other words, the effect of the other parameters $(\alpha, \boldsymbol{\tau}, \eta)$ is now transferred to $\lambda$. Then the 2nd line of (\ref{eq:modfiedUpdatedSimple}) becomes 
\begin{equation}
\label{eq:sampling:second:line:1}
p(z_i = k_{\text{new}}) =  \displaystyle \frac{1}{\boldsymbol{Z}} \frac{\nu(\mathbf{x}_i;\boldsymbol{\tau}, \eta, \gamma)  }   {n+\alpha-1}\exp \left( - \gamma \lambda  \right), \end{equation}
such that $\nu(\mathbf{x}_i;\boldsymbol{\tau}, \eta, \gamma)$ becomes a positive constant when $\gamma$ goes to infinity. Applying a similar trick to the 1st line of (\ref{eq:modfiedUpdatedSimple}), we have
\begin{align*}
	\left( \frac{p}{1-q} \right)^{f_k^i}  \left( \frac{1-p}{q} \right)^{s_k^i} & = \exp \left\lbrace f_k^i \ln \left( \frac{p}{1-q} \right) -    s_k^i \ln \left( \frac{q}{1-p} \right) \right\rbrace \\ 
	& = \exp \left\lbrace \gamma  \left( f_k^i. \xi_1  - s_k^i . \xi_2 \right)  \right\rbrace
\end{align*} 
where we introduced new variables $\xi_1 = \ln \left( \frac{p}{1-q} \right)$ and  $\xi_2  = \ln \left( \frac{q}{1-p} \right)$, which represents the confidence on \textit{having a link}, and \textit{not having a link}, respectively. Then the 1st line of (\ref{eq:constraint_i_j}) becomes: 
\begin{equation}
\label{eq:first:line:samplijng:1}
p(z_i = k) =  \displaystyle \frac{1}{\boldsymbol{Z}} \frac{n_{-i, k}}{n+\alpha-1} \exp \left\lbrace -\gamma \left( D_{\psi^*}(\mathbf{x}_i, \boldsymbol{\mu}_k) -  f_k^i. \xi_1  + s_k^i . \xi_2 \right) \right\rbrace  .   
\end{equation}
Combining (\ref{eq:sampling:second:line:1}) and (\ref{eq:first:line:samplijng:1}), we can rewrite the Gibbs updates (\ref{eq:sampling}) as follows:
$$
\begin{cases}
p(z_i = k) & \propto n_{-i, k}  \exp \left\lbrace -\gamma \left( D_{\psi^*}(\mathbf{x}_i, \boldsymbol{\mu}_k) -  f_k^i. \xi_1  + s_k^i . \xi_2 \right) \right\rbrace  \\
p(z_i = k_{\text{new}}) & \propto\nu(\mathbf{x}_i;\boldsymbol{\tau}, \eta, \gamma) \exp \left( - \gamma \lambda  \right). 
\end{cases}
$$
When $\gamma$ goes to infinity, the Gibbs sampler degenerates into a deterministic algorithm, where in each iteration, 
the assignment of $\mathbf{x}_i$ is determined by comparing the $K+1$ values below: 
$$
\Big\lbrace D_{\psi^*}(\mathbf{x}_i, \boldsymbol{\mu}_1) - f_1^i \xi_1  + s_1^i  \xi_2, \dots, \quad D_{\psi^*}(\mathbf{x}_i, \boldsymbol{\mu}_K) - f_K^i. \xi_1  + s_K^i  \xi_2, \quad \lambda \Big\rbrace; 
$$
If the $k$-th value (where $k=1, \dots, K$) is the smallest, then assign $\mathbf{x}_i$ to the $k$-th cluster. If $\lambda$ is the smallest, form a new cluster.

\subsection{Sampling the cluster parameters}
Given the cluster assignments $\lbrace z_i\rbrace_{i=1}^{n}$, the cluster centers are independent of the side information. In other words, the posterior distribution over the cluster assignments can be written in the following form: 
\begin{align*}
	p(\boldsymbol{\mu}_k | \mathbf{x}_{1:n}, z_{1:n}, \boldsymbol{\tau}, \eta, \gamma, \xi) & \propto \Big [ \prod_{i: z_i = k} \tilde{p}(\mathbf{x}_i | 
	\boldsymbol{\mu}_k, \gamma) \Big ] \times \tilde{p}(\boldsymbol{\mu}_k | \boldsymbol{\tau}, \eta, \gamma) \\ 
	& \propto \exp \left( -\left( \gamma n_k + \eta \right) D_{\psi^*} \left( \frac{ \sum_{i:z_i=k}  \gamma \mathbf{x}_i + \boldsymbol{\tau} }{ \gamma n_k + \eta }, \boldsymbol{\mu}_k  \right)  \right)
\end{align*}
in which $n_k = \# \left\lbrace i :  z_i = k \right\rbrace $. When $\gamma \to \infty$, 
\begin{align*}
	p(\boldsymbol{\mu}_k | \mathbf{x}_{1:n}, z_{1:n}, \boldsymbol{\tau}, \eta, \gamma, \xi) \propto & \exp \left( -\left( \gamma n_k + \eta \right) D_{\psi^*} \left( \frac{ 1 }{  n_k  } \sum_{i:z_i=k}   \mathbf{x}_i , \boldsymbol{\mu}_k  \right)  \right). 
\end{align*}
The maximum is attained when the arguments of the Bregman divergence are the same, i.e. 
$$
\boldsymbol{\mu}_k = \frac{1  }{  n_k  }  \sum_{i:z_i=k}   \mathbf{x}_i. 
$$
So cluster parameters are just updated by the corresponding cluster means. This completes the algorithm for RDP-mean which is shown in Algorithm \ref{alg:1}.

\RestyleAlgo{boxruled}
\begin{algorithm}[t]
	\scriptsize
	\textbf{Input:} The data points $ \mathcal{D} =  \lbrace \mathbf{x}_i \rbrace $, Relational matrix $E$, The parameter of the Bregman divergence $\psi^*$, the parameters $\lambda$, $\xi_0$, and its rate of increase at each iteration $\xi_{rate}$. \\
	\KwResult{ The assignment variables $ \mathbf{z}  = [z_1, z_2, \hdots, z_n] $ and the component parameters.  }
	\textbf{Initialization:} $\xi \leftarrow \xi_{0}$, and all points are assigned to one single cluster. \\ 
	\While{ not converged }{
		\For{ $\mathbf{x}_i \in \mathcal{D}$  } { 
			\For{ $\boldsymbol{\mu}_k \in \mathcal{C}$  } { 
				Find the values of $f_k^i$ and $s_k^i$ for $\mathbf{x}_i$ from the matrix $E$, and using the current $\mathbf{z}$ as defined in (\ref{eq:objective:function}). \;
				$dist(\mathbf{x}_i,\boldsymbol{\mu}_k) \leftarrow D_{\psi^*}(\mathbf{x}_i,\boldsymbol{\mu}_k) - \xi_1 f_k^i + \xi_2 s_k^i$ \; 
			}
			$[d_{\min}, i_{\min}] \leftarrow \lbrace dist(\mathbf{x}_i,\boldsymbol{\mu}_1), \hdots, dist(\mathbf{x}_i,\boldsymbol{\mu}_K)  \rbrace$ ;\ \\
			// $d_{\min}$ is the minimum distance and $i_{\min}$ is the index of the minimum distance.  \\ 
			\eIf{ $d_{\min} < \lambda$} { 
				$z_i \leftarrow i_{\min}$ \; 
			} 
			{
				// Add a new cluster: \\ 
				$\mathcal{C} \leftarrow \lbrace \mathcal{C} \cup \mathbf{x}_i  \rbrace  $ \\ 
				$K \leftarrow K + 1$ 
			}
		} 
		\For{ $\boldsymbol{\mu}_k \in \mathcal{C}$  } { 
			// given the current assignment of points, find the set of points assigned to cluster $k$, $\mathcal{D}_k$: \\ 
			\eIf{$|\mathcal{D}_j| > 0$}{
				$\boldsymbol{\mu}_K \leftarrow \frac{ \sum_{\mathbf{x}_i \in \mathcal{D}_j} \mathbf{x}_i  }{|\mathcal{D}_j|}$ 
			}{
			// Remove the cluster and apply the changes to the related variables \;
		}	 
	} 
	$\xi \leftarrow \xi \times \xi_{rate} $
}
\caption{Relational DP-means algorithm}
\label{alg:1}
\end{algorithm}
\subsection{Effect of changing $\xi_1$ and $\xi_2$}
Taking $\xi_{1}, \xi_{2} \rightarrow 0$, RDP-means will behave like DP-means, i.e. no side information is considered. Taking $\xi_1, \xi_2 \rightarrow +\infty $ puts all the weight on the side information and no weight on the point observations. In other words, it generates a set of clusters according to just the constraints in $E$. In a similar way, we can put more weight on \textit{may} links compared to \textit{may-not} links by choosing $\xi_1 > \xi_2$ and vice versa. \\

As we will show, there is an objective function which corresponds to our algorithm. The objective function has many local minimum and the algorithm minimizes it in a greedy fashion. Experimentally we have observed that if we initialize $\xi_1 = \xi_2 = \xi$ with a very small value $\xi_0$ and increase it each iteration, incrementally tightening the constraints, it gives a desirable result.

\subsection{Objective Function}
\begin{theorem}
	The constrained clustering RDP-means (Algorithm 1) iteratively minimizes the following objective function. 
	\begin{equation}
	\min_{ \lbrace  \mathcal{I}_k \rbrace_{k=1}^{K} } \sum_{k = 1}^{K} \sum_{i \in \mathcal{I}_k}^{} \left[  D_{\phi}(\mathbf{x}_i, \boldsymbol{\mu}_k) -\xi_1f^i_k +\xi_2 s^i_k  \right]  + \lambda K
	\label{eq:objective:function}
	\end{equation}
	where $\mathcal{I}_1, \dots, \mathcal{I}_K$ denote a partition of the $n$ data instances. 
\end{theorem}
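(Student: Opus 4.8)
The plan is to treat Algorithm~\ref{alg:1} as a block coordinate-descent (Lloyd-type) scheme acting on the objective $J$ in~(\ref{eq:objective:function}), viewed as a function of the partition $\{\mathcal{I}_k\}$, the centers $\{\boldsymbol{\mu}_k\}$, and the number of clusters $K$. The algorithm alternates two phases in each outer iteration: a sweep that reassigns every point (possibly opening a new cluster), and a recomputation of the centers. I would prove that neither phase ever increases $J$. Since $D_\phi \ge 0$, the side-information contribution is bounded, and there are only finitely many partitions of $n$ points, a non-increasing $J$ forces termination at a configuration that is a local minimum of~(\ref{eq:objective:function}); establishing the two monotonicity statements is thus the entire content of the theorem.

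For the center-update phase I would fix the partition and the counts $f_k^i, s_k^i$, which are independent of the $\boldsymbol{\mu}_k$, so that the only $\boldsymbol{\mu}_k$-dependent part of $J$ is $\sum_{i\in\mathcal{I}_k} D_\phi(\mathbf{x}_i, \boldsymbol{\mu}_k)$, minimized separately over each $k$. The Bregman-mean identity (the same fact used in Section~4.4, cf.\ \cite{banerjee2005clustering}) states that $\boldsymbol{\mu}_k = n_k^{-1}\sum_{i\in\mathcal{I}_k}\mathbf{x}_i$ is the unique minimizer of this sum, which is exactly the update performed by the algorithm. Hence the center update cannot increase $J$.

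The reassignment phase is the delicate part, because the side-information part of $J$ is \emph{not} separable over points: moving $\mathbf{x}_i$ from cluster $a$ to cluster $b$ alters not only $i$'s own cost but also the $f$- and $s$-counts of each of $i$'s friends and strangers. The device that tames this coupling is to rewrite the side-information sum in symmetric pairwise form, $\sum_k\sum_{i\in\mathcal{I}_k}\left(-\xi_1 f_k^i + \xi_2 s_k^i\right) = -2\xi_1 P_F + 2\xi_2 P_S$, where $P_F$ and $P_S$ count the friend and stranger pairs placed in a common cluster. In this form, moving $\mathbf{x}_i$ from $a$ to $b$ changes $P_F$ by $f_b^i - f_a^i$ and $P_S$ by $s_b^i - s_a^i$ (with all counts excluding $i$), so the induced change in $J$ is the difference of the per-point scores $D_\phi(\mathbf{x}_i,\boldsymbol{\mu}_k) - 2\xi_1 f_k^i + 2\xi_2 s_k^i$ between $k=b$ and $k=a$. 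Comparing these $K$ scores together with the constant $\lambda$ for opening a fresh singleton (whose Bregman cost $D_\phi(\mathbf{x}_i,\mathbf{x}_i)=0$ contributes only the $+\lambda$ of the $\lambda K$ penalty), and assigning $\mathbf{x}_i$ to the smallest, selects the move that minimizes $J$ over $i$'s assignment while the other points are held fixed. A sequential sweep over $i$ can therefore only decrease $J$, and deleting emptied clusters further reduces $\lambda K$.

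The step I expect to be the main obstacle is reconciling this pairwise identity with the decision rule actually stated in Algorithm~\ref{alg:1}, which compares the scores $D_\phi(\mathbf{x}_i,\boldsymbol{\mu}_k) - \xi_1 f_k^i + \xi_2 s_k^i$ rather than the doubled ones dictated by $P_F, P_S$. One must check that this factor of two, arising from each within-cluster pair being counted once for each of its two endpoints, is absorbed consistently --- i.e.\ that the $\xi_1,\xi_2$ in the decision rule are calibrated to be half the pairwise penalties appearing in~(\ref{eq:objective:function}), equivalently that the family of algorithms over all $\xi$ coincides with the family of objectives over all $\xi$ under a relabeling $\xi \mapsto \xi/2$. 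One must also verify that the counts used when reassigning $\mathbf{x}_i$ exclude $i$ itself, so that removing $i$ from its current cluster and inserting it into the candidate cluster are accounted for symmetrically. Once this bookkeeping is pinned down, monotonicity together with boundedness below and finiteness of the partition space yields convergence to a local optimum in the standard way.
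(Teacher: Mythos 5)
Your overall strategy is the same as the paper's: following \cite{kulis2011revisiting}, the proof in the text is a two-phase alternating-minimization argument (reassignment cannot increase the objective, opening a singleton costs exactly $\lambda$, and the Bregman-mean identity of \cite{banerjee2005clustering} handles the center update). Where you genuinely depart from the paper is in the reassignment phase. The paper's proof simply asserts that assigning each point to the cluster with the smallest augmented distance decreases the objective, implicitly treating $\sum_{k}\sum_{i\in\mathcal{I}_k}(-\xi_1 f_k^i+\xi_2 s_k^i)$ as separable over points; your symmetric pairwise rewriting $-2\xi_1 P_F+2\xi_2 P_S$ is the correct way to account for the fact that moving $\mathbf{x}_i$ also perturbs the $f$- and $s$-counts of its friends and strangers, and it is absent from the paper's argument. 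The factor-of-two calibration you flag is a real discrepancy, not a technicality you failed to resolve: as literally written, the decision rule in Algorithm~\ref{alg:1} compares $D_{\phi}(\mathbf{x}_i,\boldsymbol{\mu}_k)-\xi_1 f_k^i+\xi_2 s_k^i$ against $\lambda$, whereas the change it induces in~(\ref{eq:objective:function}) involves $-2\xi_1 f_k^i+2\xi_2 s_k^i$, so monotone descent of~(\ref{eq:objective:function}) holds only after either halving the $\xi$'s in the objective or reading each within-cluster pair as counted once. Your version, with that rescaling made explicit, is therefore tighter than the published proof; the remaining pieces of your plan (centers fixed during the sweep so the $D_{\phi}$ part is separable, counts excluding $i$ itself, finiteness of the partition space for termination) all go through as you describe.
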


\begin{proof}
	In the proof we follow a similar argument as in \cite{kulis2011revisiting}.
	For simplicity, let us assume $\xi_1 = \xi_2 = \xi$ and call the value $ D_{\phi}(\mathbf{x}_i, \boldsymbol{\mu}_k) - \xi(f^i_k - s^i_k) $ the {\it augmented distance}. For a fixed number of clusters, each point gets assigned to the cluster that has the smaller augmented distance, thus decreasing the value of the objective function. When the augmented distance value of an element $ D_{\phi}(\mathbf{x}_i, \boldsymbol{\mu}_k) - \xi(f^i_k - s^i_k) $ is more than $\lambda$, we remove the point from its existing cluster, add a new cluster centered at the data point and increase the value of $K$ by one. This increases the objective by $\lambda$ (overall decrease in the objective function). For a fixed assignment of the points to the clusters, finding the cluster centers by averaging the assigned points minimizes the objective function. Thus, the objective function is decreasing after each iteration.  
\end{proof}

\subsection{Spectral Interpretation}
Following the spectral relaxation framework for the K-means objective function introduced by 
\cite{zha2001spectral} and \cite{kulis2011revisiting}, we can apply the same reformulation to our framework, given the objective function (\ref{eq:objective:function}). Consider the following optimization problem:
\begin{equation}
\label{eq:spectral:objective:1}
\hspace{-0.4cm}
\max_{ \lbrace Y | Y^\top Y = I_n \rbrace } \text{tr} \left(  Y^\top  \left(  K - \lambda I + \xi_1 E^+ - \xi_2 E^- \right) Y \right),  
\end{equation}
where $Y = Z (Z^\top Z)^{-1/2} \in \mathbb{R}^{p\times k}$ is the normalized point-component assignment matrix, and $K$ is the kernel matrix which is defined as: 
$$
K = A^\top A \in \mathbb{R}^{p\times p} , \quad A^\top = \left[ \mathbf{x}_1, \hdots, \mathbf{x}_n \right] \in \mathbb{R}^{p\times n}
$$
$E^+ = \mathbf{1} \lbrace E > 0 \rbrace$ and $E^- = \mathbf{1} \lbrace E < 0 \rbrace$ are side information matrices for \textit{may} and \textit{may-not} links, respectively (where $\mathbf{1} \lbrace . \rbrace$ is applied elementwise). 
\\ 
In particular if $\xi_1 = \xi_2 = \xi$ Equation \ref{eq:spectral:objective:1} becomes: 
$$
\max_{ \lbrace Y | Y^\top Y = I_n \rbrace } \text{tr} \left(  Y^\top  \left(  K - \lambda I + \xi E \right) Y \right).
$$

\begin{theorem}
	The objective function in (\ref{eq:spectral:objective:1}) is equivalent the objective function in (\ref{eq:objective:function}). 
\end{theorem}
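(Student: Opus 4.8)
The plan is to reduce to the squared-Euclidean case ($\phi=\|\cdot\|^2$, so that $D_\phi(\mathbf{x}_i,\boldsymbol{\mu}_k)=\|\mathbf{x}_i-\boldsymbol{\mu}_k\|^2$ and the Gram matrix $K$ genuinely enters), which is the regime of the relaxation of \cite{zha2001spectral} and \cite{kulis2011revisiting} that we are invoking. I would encode a partition $\{\mathcal{I}_k\}$ by the binary assignment matrix $Z\in\{0,1\}^{n\times K}$ ($Z_{ik}=1$ iff $i\in\mathcal{I}_k$) and its normalized version $Y=Z(Z^\top Z)^{-1/2}$, identifying $K$ with the $n\times n$ Gram matrix $K_{ij}=\langle\mathbf{x}_i,\mathbf{x}_j\rangle$ so that $Y^\top K Y$ is well defined. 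I would first record the two elementary facts $Z^\top Z=\mathrm{diag}(n_1,\dots,n_K)$ and hence $Y^\top Y=I_K$, so that a valid partition is exactly a feasible point of (\ref{eq:spectral:objective:1}) of this special form, and the orthogonality constraint is the continuous relaxation of the hard partition constraint.

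Next I would split the trace $\mathrm{tr}\!\left(Y^\top(K-\lambda I+\xi_1 E^+-\xi_2 E^-)Y\right)$ into its natural pieces and match each against a term of (\ref{eq:objective:function}). For the kernel piece I would use the identity underlying spectral $K$-means: since $\boldsymbol{\mu}_k=\tfrac{1}{n_k}\sum_{i\in\mathcal{I}_k}\mathbf{x}_i$, one has $\sum_{i\in\mathcal{I}_k}\|\mathbf{x}_i-\boldsymbol{\mu}_k\|^2=\sum_{i\in\mathcal{I}_k}\|\mathbf{x}_i\|^2-n_k\|\boldsymbol{\mu}_k\|^2$, and the diagonal entry $(Y^\top K Y)_{kk}=\tfrac{1}{n_k}\|\sum_{i\in\mathcal{I}_k}\mathbf{x}_i\|^2=n_k\|\boldsymbol{\mu}_k\|^2$ yields $\sum_k\sum_{i\in\mathcal{I}_k}\|\mathbf{x}_i-\boldsymbol{\mu}_k\|^2=\mathrm{tr}(K)-\mathrm{tr}(Y^\top K Y)$. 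For the $-\lambda I$ piece I would note $\mathrm{tr}(Y^\top(-\lambda I)Y)=-\lambda\,\mathrm{tr}(Y^\top Y)=-\lambda K$, accounting for the model-complexity term $\lambda K$ after the sign flip. For the side-information pieces I would use $(ZZ^\top)_{ij}=\mathbf{1}\{z_i=z_j\}$ to write $\sum_k\sum_{i\in\mathcal{I}_k}f^i_k=\sum_{i,j}E^+_{ij}\mathbf{1}\{z_i=z_j\}$, the count of may-links internal to clusters, and analogously for $s^i_k$ with $E^-$ (recoding may-not links as $-1$ so that $E=E^+-E^-$), relating these to $\mathrm{tr}(Y^\top E^+ Y)$ and $\mathrm{tr}(Y^\top E^- Y)$.

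Assembling the pieces, dropping the partition-independent constant $\mathrm{tr}(K)$, and negating the objective turns the minimization (\ref{eq:objective:function}) into the maximization (\ref{eq:spectral:objective:1}), with the signs flowing as $-\xi_1\sum f+\xi_2\sum s\mapsto \xi_1\,\mathrm{tr}(Y^\top E^+Y)-\xi_2\,\mathrm{tr}(Y^\top E^- Y)$; the specialization $\xi_1=\xi_2=\xi$ then recovers the stated form with $\xi E$. The main obstacle I anticipate is a normalization mismatch in the side-information terms: because $Y$ carries the factor $(Z^\top Z)^{-1/2}$, the trace $\mathrm{tr}(Y^\top E^+ Y)$ equals $\sum_k\tfrac{1}{n_k}\sum_{i\in\mathcal{I}_k}f^i_k$, weighted by inverse cluster size, whereas the counts in (\ref{eq:objective:function}) are unweighted. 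The very same $\tfrac{1}{n_k}$ is what makes the kernel term collapse cleanly to $n_k\|\boldsymbol{\mu}_k\|^2$ (using $Z$ instead would give $n_k^2\|\boldsymbol{\mu}_k\|^2$), so the data and constraint terms cannot both be matched verbatim by a single $Y$. I would therefore argue the claim at the level of the relaxation in the spirit of \cite{kulis2011revisiting}, namely that the discrete feasible points $Y=Z(Z^\top Z)^{-1/2}$ of (\ref{eq:spectral:objective:1}) are in correspondence with the partitions optimized in (\ref{eq:objective:function}), and make explicit that ``equivalent'' is to be read in this spectral-relaxation sense rather than as a termwise identity.
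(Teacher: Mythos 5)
Your approach is the same as the paper's: encode the partition by $Y = Z(Z^\top Z)^{-1/2}$, expand the trace term by term, and match the kernel piece, the $-\lambda I$ piece, and the side-information piece against the three summands of (\ref{eq:objective:function}). The paper disposes of the first two pieces by citing Lemma~5.1 of \cite{kulis2011revisiting} and only computes the new $E$ term; you work all three out explicitly, which is fine and makes the role of the normalization visible. The substantive point is the obstacle you flag at the end, and you are right to flag it: it is a genuine issue that the paper's own proof silently steps over. The paper asserts
\begin{equation*}
\mathrm{tr}\left(Y^\top E Y\right) \;=\; \sum_{k=1}^{K}\sum_{i \in \mathcal{I}_k}\sum_{j \in \mathcal{I}_k} E(i,j),
\end{equation*}
but with the normalized $Y$ the correct identity is
\begin{equation*}
\mathrm{tr}\left(Y^\top E Y\right) \;=\; \sum_{k=1}^{K}\frac{1}{n_k}\sum_{i \in \mathcal{I}_k}\sum_{j \in \mathcal{I}_k} E(i,j),
\end{equation*}
exactly as you observe. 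The unnormalized version would require taking $Y = Z$, which destroys $Y^\top Y = I$ and with it the computations $\mathrm{tr}(Y^\top \lambda I\, Y) = \lambda K$ and $\mathrm{tr}(Y^\top K Y) = \sum_k n_k\|\boldsymbol{\mu}_k\|^2$ that the data and penalty terms rely on. So the data term and the constraint term cannot both be matched verbatim by one and the same $Y$, and the stated equivalence holds only up to a per-cluster reweighting of the side-information counts by $1/n_k$ --- or, as you propose, only in the looser spectral-relaxation sense of a correspondence between discrete feasible points of (\ref{eq:spectral:objective:1}) and partitions in (\ref{eq:objective:function}). Your write-up is therefore more careful than the paper's proof; to obtain an exact termwise identity one would have to either replace $f^i_k - s^i_k$ by $\frac{1}{n_k}\left(f^i_k - s^i_k\right)$ in (\ref{eq:objective:function}) or renormalize the constraint matrices entering (\ref{eq:spectral:objective:1}).
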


\begin{proof}
	In \cite{kulis2011revisiting} (Lemma 5.1) it has been proved that 
	$
	\max_{ \lbrace Y | Y^\top Y = I_n \rbrace } \text{tr} \left(  Y^\top  \left(  K - \lambda I \right) Y \right) 
	$ is equivalent minimizing the objective function of DP-means. For simplicity, we prove the case for $\xi_1 = \xi_2 = \xi$, although the general case can also be proved in a very similar fashion. In our objective function we have the additional term $
	\max_{ \lbrace Y | Y^\top Y = I_n \rbrace } \text{tr} \left(  Y^\top  \left(  \xi E \right) Y \right) 
	$ which we will prove to be $\xi \sum_{k=1}^{K} \sum_{i \in \mathcal{I}_k} \left( f^i_k - s^i_k  \right)$:
	\begin{align*}
		\text{tr} \left(  Y^\top (\xi E) Y \right) & = \xi \text{tr} \left(  Y^\top E Y \right) \\
		& = \xi \sum_{k=1}^{K} \sum_{i \in \mathcal{I}_k}  \sum_{j \in \mathcal{I}_k} E(i, j) \\ 
		& = \xi \sum_{k=1}^{K} \sum_{i \in \mathcal{I}_k} \left( \sum_{j \in \mathcal{I}_k} \mathbf{1} \left\lbrace E(i, j)=1 \right\rbrace - \sum_{j \in \mathcal{I}_k} \mathbf{1} \left\lbrace E(i, j)=-1 \right\rbrace  \right) \\ 
		& = \xi \sum_{k=1}^{K} \sum_{i \in \mathcal{I}_k} \left( f^i_k - s^i_k  \right).
	\end{align*}
\end{proof}

Given the objective function in Equation \ref{eq:spectral:objective:1}, we can use Theorem 5.2 in \cite{kulis2011revisiting} and design a spectral algorithm for solving our problem, simply by finding eigenvectors of $ K + \xi_1 E^+ - \xi_2 E^-$ that have an eigenvalue larger than $\lambda$. By this interpretation one can easily see that if $\xi_1=\xi_2 = 0$, the objective function is equivalent to the DP-means objective and when $\xi_1$ and $\xi_2$ are large, the clustering only makes use of the side information.


\section{Experiments}
In this section we report experiments on simulated data, a variety of UCI datasets and an Image Net dataset.\footnote{The code and data for our experiments and implementation is available at \url{https://goo.gl/i6yoPb.}} For evaluation, we report the $F$-measure (F) exactly as defined in Section 4.1 of \cite{bilenko2004integrating}, adjusted Rand index (AdjRnd) and normalized mutual information (NMI). For RDP-Means, in all experiments, we terminate the algorithm when the cluster assignments did not change after 20 iterations, and we initialize $\xi_0=0.001$ and $\xi_{rate}=2$.  For DP-means and RDP-means we calculate $\lambda$ based on the {\it $k$-th furthest first} method explained in \cite{kulis2011revisiting}. Although we use the actual $k$ in calculating $\lambda$, in practice, $\lambda$ is less sensitive to initialization (See Figure \ref{fig:simulated_data:full:deviation}).
\begin{figure}
	\centering
	\includegraphics[trim=0cm 6.3cm 4.6cm 5.4cm, scale=0.60, clip=true]{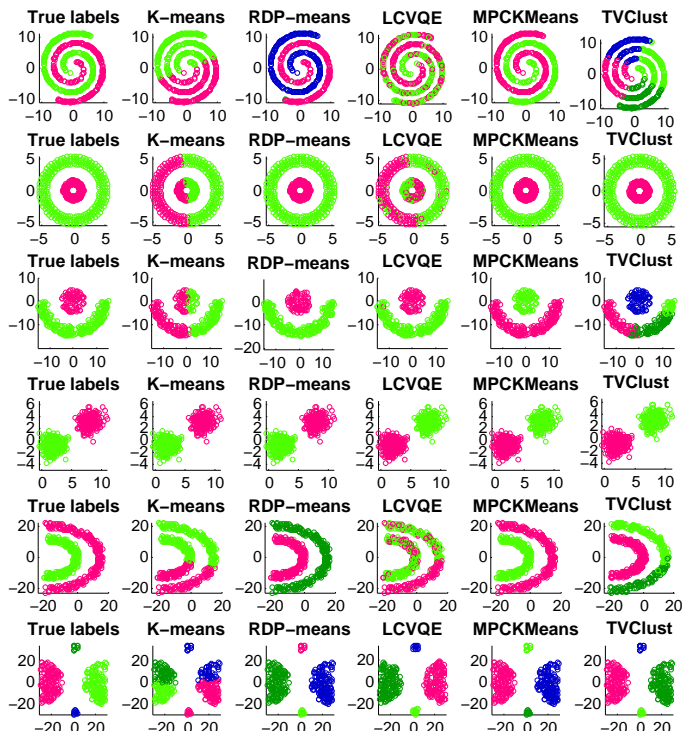}
	\vspace{-0.22cm}
	\caption{Comparison of cluster quality on example of our synthetic data.}
	\label{fig:simulated_data:full}
\end{figure}

We compare with all constrained or semi-supervised clustering techniques from the literature that we could find online and from personal communications.\footnote{See  \url{http://goo.gl/tSSH95} for a list of methods with links to their implementations.} In our experiments, we do not include results from methods where we observed unstable behavior such as numerical instabilities. The parameters for all algorithms were set to the default settings of the authors' implementation. \footnote{The code for LCVQE is kindly provided by the authors of \cite{covoes2013study} via personal communication.}.


We experiment on three tasks. The first experiment evaluates the algorithms on a set of two-dimensional simulated data that showcases difficult clustering problems in order to gain some visual intuition of how the algorithms perform. The second experiment evaluates on a collection of 5 datasets from the UCI repository, commonly used for evaluation of clustering tasks: iris, wine, ecoli, glass, and balance. We also study the effect of varying some of the key parameters in these experiments. The third task illustrates the effectiveness of using side information in the image clustering task of \cite{jiang2012small}.

One important variable is the percentage of side information $r$, which is the number of $\pm 1$ elements in the $E$ matrix (Section 3.1) normalized by its size.  We experimented with varying $r$, adding noise with probability $1-p$ to the constraint matrix, and deviating initialization from the true $k$ value.

\subsection{Simulated Data}


We evaluate on 6 different patterns using $p=1$ (i.e. no noise) with a sampling rate of $r = 0.01$. The results for all patterns are shown in Figure~\ref{fig:simulated_data:full}. Each algorithm was tested 5 times, and we took the strongest result from these 5 runs to display.


\subsection{UCI Datasets}
We evaluate on five datasets\footnote{The data is directly downloaded from \url{http://archive.ics.uci.edu/ml/}.} experimenting with different settings. In the first setting, we vary the percentage of constraints sampled, $r$, which we choose from $\{0.01, 0.03, 0.05\}$. Secondly, we add noise, letting the parameter $p$ take on values in $\{1.0, 0.95, 0.90, 0.80\}$.
Lastly, we investigate the sensitivity of the algorithms to deviations from the true number of clusters, $k$, where we choose the {\it deviation} from the set $\{\pm 3, \pm 2, \pm 1, 0\}$. For each dataset and set of hyper-parameters in this section, we average the results of five trials to produce the final result.

\begin{table}
	\centering
	{
		\small
				\resizebox{\columnwidth}{!}{%
		\begin{tabular}{|c||c|c|c||c|c|c||c|c|c|}
			\hline  
			Method $\setminus$ Dataset &  \multicolumn{3}{c||}{iris}  & \multicolumn{3}{c||}{wine} & \multicolumn{3}{c|}{ecoli}  \\
			\hline	
			&  F & AdjRnd & NMI &  F & AdjRnd & NMI &  F & AdjRnd & NMI \\
			\cline{2-10}  
			K-means & 0.81 &   0.71 &   0.74 &   0.59 &   0.36 &   0.42 &   0.61 &   0.50 &   0.63  \\
			DP-means & 0.74 &   0.57 &   0.69 &   0.63 &   0.37 &   0.44 &   0.71 &   0.61 &   0.64 \\ 
			TVClust(variational) & 0.91 &   0.85 &   0.90 &   0.56 &   0.45 &   0.53 &   0.85 &   0.79 &   0.76  \\ 
			RDP-means & \textbf{0.86} &  \textbf{0.80} &  \textbf{0.80} & \textbf{0.81} & \textbf{0.73} &  \textbf{0.72} &  \textbf{0.90} &  \textbf{0.86} & \textbf{ 0.82} \\
			MPCKMeans &	0.53 &   0.29 &   0.30 &   0.54 &   0.30 &   0.30 &   0.57 &   0.33 &   0.33 \\
			LCVQE & 0.73 &   0.58 &   0.60 &   0.57 &   0.35 &   0.39 &   0.61 &   0.52 &   0.61  \\
			\hline 
			\hline  
			Method $\setminus$ Dataset &  \multicolumn{3}{c||}{glass}  & \multicolumn{3}{c||}{balance} & \multicolumn{3}{c|}{averaged over datasets}  \\
			\hline	
			&  F & AdjRnd & NMI &  F & AdjRnd & NMI &  F & AdjRnd & NMI \\
			\cline{2-10}  
			K-means & 0.57 &   0.47 &   0.71 &   0.47 &   0.14 &   0.12  & 0.61 &	0.44	& 0.52 \\
			DP-means &   0.53 &   0.29 &   0.45 &   0.31 &   0.12 &   0.21  &	0.58	& 0.39 &	0.48 \\ 
			TVClust(variational) &  0.42 &   0.22 &   0.42 &   \textbf{0.94} &   \textbf{0.92} &   \textbf{0.91} &	0.74 & 	0.64 &	0.70 \\ 
			RDP-means &   \textbf{0.82} &   \textbf{0.76} &   \textbf{0.73} &   \textbf{0.94} &  \textbf{0.92} &   {0.88} &	\textbf{0.87} &	\textbf{0.81} &	\textbf{0.79} \\
			MPCKMeans &	  0.46 &   0.30 &   0.36 &   0.70 &   0.26 &   0.28  &	0.56	 & 0.30 & 	0.31 \\
			LCVQE &  0.64 &   0.55 &   0.66 &   0.62 &   0.38 &   0.37 &	0.64 &	0.48 &	0.53 \\
			\hline 
		\end{tabular}		
	}
	}
	\caption{Results over each UCI dataset averaging over $p$ and $r$ parameters. RDP-means has the best performance overall but not in some particular cases as shown in Tables~\ref{tab:resultPerNoiseRate} and~\ref{tab:resultPerSamplingRate}.}
	\label{tab:uciresultsF}
\end{table}

\textbf{Average over all parameters:}
We present the performance of the algorithms, per dataset, averaged over different values of the parameters $p$ and $r$. The results are summarized in Table \ref{tab:uciresultsF} and show that overall, RDP-means has the best performance. 

\textbf{Average all parameters varying amount of noise:}
To analyze how adding noise to the constraints affects performance, we vary the values of $p$ for each algorithm on each dataset. As mentioned previously, the probability of choosing noisy constraints is proportional to $1-p$. The higher the value of $p$, the less noise in the constraints. The results as a function of noise are summarized in Table \ref{tab:resultPerNoiseRate}. We average over different constraint sizes $r \in \{0.01, 0.03, 0.05\}$ and different datasets.

First note that the results for K-means and DP-means are the same across different noise rates,\footnote{Small variations in the results of K-means is possible due to random initialization of each run} since these algorithms do not make use of constraints. Another observation is that, MPCKMeans has the best performance for $p=1$, although for $p=0.95$ its performance drops significantly. Therefore, this method is a good option when the side information is relatively pure. Other methods, including RDP-means, TVClust and LCVQE, have drops as well when increasing the noise level, although the drops for TVClust and RDP-means are smaller. 

\begin{table}
	\centering
	{
		\footnotesize
		\resizebox{\columnwidth}{!}{%
		\begin{tabular}{|c||c|c|c||c|c|c||c|c|c||c|c|c|}
			\hline  
			Method $\setminus$ Param.  & \multicolumn{3}{c||}{$p=1$} & \multicolumn{3}{c||}{$p=0.95$} & \multicolumn{3}{c||}{$p=0.9$} & \multicolumn{3}{c|}{$p=0.8$}  \\
			\hline	
			\hline
			&  F  &  AdjRnd  &  NMI  &  F  &  AdjRnd  &  NMI  &  F  &  AdjRnd  &  NMI  &  F  &  AdjRnd  &  NMI   \\ 
			\cline{2-13}
			K-means  & 0.61 & 0.44 & 0.53 & 0.60 & 0.43 & 0.52 & 0.61 & 0.43 & 0.53 & 0.61 & 0.43 & 0.52 \\ 
			DP-means  & 0.58 & 0.39 & 0.49 & 0.59 & 0.39 & 0.48 & 0.59 & 0.40 & 0.49 & 0.58 & 0.38 & 0.48 \\
			TVClust (variational)  & 0.77 & 0.69 & 0.74 & 0.75 & 0.66 & 0.72 & 0.73 & 0.63 & 0.69 & 0.68 & 0.56 & \textbf{0.64} \\
			RDP-means  & 0.93 & 0.90 & 0.89 & \textbf{0.92} & \textbf{0.89} & \textbf{0.87} & \textbf{0.87} & \textbf{0.82} & \textbf{0.79} & \textbf{0.75} & \textbf{0.65} & 0.62 \\
			MPCKMeans  & \textbf{0.94} & \textbf{0.91} & \textbf{0.90} & 0.46 & 0.14 & 0.17 & 0.44 & 0.10 & 0.12 & 0.41 & 0.04 & 0.07 \\\
			LCVQE  & 0.83 & 0.76 & 0.79 & 0.64 & 0.48 & 0.53 & 0.58 & 0.39 & 0.45 & 0.50 & 0.27 & 0.35 \\
			\hline 
		\end{tabular}
	}
	}
	\caption{This table illustrates how the algorithms perform under different levels of noise. We average the results over each UCI dataset and values of $r$.}
	\label{tab:resultPerNoiseRate}
\end{table}

\textbf{Average all parameters varying amount of side information:}
To better understand the effect of side-information, we unroll the results of Table \ref{tab:resultPerNoiseRate} and show the performance as a function of $r$.  The results are shown in Table \ref{tab:resultPerSamplingRate}. 

Unsurprisingly, adding constraints (increasing $r$) increases the performance of those algorithms that make use of them. Interestingly, for $p = 0.8$ and $r = 0.01$, the best algorithms that do make use of constraints have similar performance to K-means and DP-means (which do not use constraints). This suggests there could be space for improvement for handling noisy constraints.

\begin{table}
	\centering
	{
		\footnotesize
		\resizebox{\columnwidth}{!}{%
		\begin{tabular}{c|c||c|c|c||c|c|c||c|c|c||c|c|c|}
			\cline{2-14}  
			& Method $\setminus$ Param.  & \multicolumn{3}{c||}{$p=1$} & \multicolumn{3}{c||}{$p=0.95$} & \multicolumn{3}{c||}{$p=0.9$} & \multicolumn{3}{c|}{$p=0.8$}  \\
			\cline{2-14}  	
			\hline
			\multicolumn{1}{|c|}{ \multirow{7}{*}{\rotatebox[origin=c]{90}{$r = 0.01$}}}  &   &   F   &  AdjRnd   &   NMI   &   F   &   AdjRnd   &   NMI   &   F   &   AdjRnd   &   NMI   &   F   &   AdjRnd   &   NMI \\
			\cline{3-14}
			\multicolumn{1}{ |c| }{}  & K-means   &  0.62  &  0.45  &  0.53  &  0.62  &  0.45  &  0.53  &  0.61  &  0.44  &  0.53  &  \textbf{0.60}  &  \textbf{0.43}  &  0.52\\ 
			\multicolumn{1}{ |c| }{}  & DP-means   &  0.58  &  0.39  &  0.48  &  0.59  &  0.39  &  0.48  &  0.58  &  0.38  &  0.48  &  0.58  &  0.38  &  0.48 \\ 
			\multicolumn{1}{ |c| }{}  & TVClust(variational)   &  0.72  &  0.62  &  0.69  &  0.69  &  0.57  &  0.65  &  0.66  &  0.52  &  \textbf{0.60}  & 0.58  & \textbf{ 0.43}  & \textbf{ 0.53} \\
			\multicolumn{1}{ |c| }{}  & RDP-means   &  \textbf{0.84}  &  \textbf{0.77}  &  \textbf{0.76}  &  \textbf{0.79}  &  \textbf{0.71}  &  \textbf{0.68}  &  \textbf{0.69}  &  \textbf{0.58}  &  0.57  &  0.56  &  0.39  &  0.41 \\
			\multicolumn{1}{ |c| }{}  & MPCKMeans   &  0.83  &  0.76  &  0.73  &  0.52  &  0.23  &  0.29  &  0.49  &  0.16  &  0.21  &  0.44  &  0.07  &  0.12 \\
			\multicolumn{1}{ |c| }{}  & LCVQE   &  0.73  &  0.62  &  0.66  &  0.69  &  0.56  &  0.59  &  0.62  &  0.46  &  0.49  &  0.52  &  0.32  &  0.36 \\ 
			\hline 
			\hline 
			\multicolumn{1}{|c|}{ \multirow{7}{*}{\rotatebox[origin=c]{90}{$r = 0.03$}}}  &   &  F   &  AdjRnd   &  NMI   &  F   &  AdjRnd   &  NMI   &  F   &  AdjRnd   &  NMI   &  F   &  AdjRnd   &  NMI  \\
			\cline{3-14}
			\multicolumn{1}{ |c| }{}  & K-means   & 0.60  & 0.43  & 0.52  & 0.60  & 0.42  & 0.52  & 0.61  & 0.44  & 0.53  & 0.61  & 0.45  & 0.53 \\ 
			\multicolumn{1}{ |c| }{}  & DP-means   & 0.59  & 0.39  & 0.49  & 0.58  & 0.39  & 0.49  & 0.59  & 0.39  & 0.48  & 0.58  & 0.38  & 0.47 \\
			\multicolumn{1}{ |c| }{}  & TVClust(variational)   & 0.78  & 0.70  & 0.75  & 0.77  & 0.69  & 0.74  & 0.76  & 0.68  & 0.73  & 0.70  & 0.59  & \textbf{0.67} \\
			\multicolumn{1}{ |c| }{}  & RDP-means   & 0.98  & 0.98  & 0.96  &\textbf{ 0.98}  & \textbf{0.97}  & \textbf{0.94}  & \textbf{0.93}  & \textbf{0.90}  & \textbf{0.86}  & \textbf{0.77}  & \textbf{0.69}  & 0.63 \\
			\multicolumn{1}{ |c| }{}  & MPCKMeans   & \textbf{0.99}  & \textbf{0.99}  & \textbf{0.97}  & 0.43  & 0.07  & 0.10  & 0.41  & 0.04  & 0.06  & 0.40  & 0.02  & 0.04 \\
			\multicolumn{1}{ |c| }{}  & LCVQE   & 0.86  & 0.81  & 0.85  & 0.62  & 0.47  & 0.52  & 0.57  & 0.38  & 0.45  & 0.48  & 0.25  & 0.33 \\
			\hline 
			\hline 
			\multicolumn{1}{|c|}{ \multirow{7}{*}{\rotatebox[origin=c]{90}{$r = 0.05$}}}   &  &  F   &  AdjRnd   &  NMI   &  F   &  AdjRnd   &  NMI   &  F   &  AdjRnd   &  NMI   &  F   &  AdjRnd   &  NMI  \\
			\cline{3-14}
			\multicolumn{1}{ |c| }{} & K-means  & 0.62 & 0.45 & 0.54 & 0.59 & 0.42 & 0.52 & 0.60 & 0.43 & 0.52 & 0.60 & 0.42 & 0.51 \\
			\multicolumn{1}{ |c| }{} & DP-means  & 0.58 & 0.39 & 0.49 & 0.59 & 0.39 & 0.48 & 0.60 & 0.42 & 0.50 & 0.58 & 0.38 & 0.48 \\
			\multicolumn{1}{ |c| }{} & TVClust(variational)  & 0.81 & 0.74 & 0.79 & 0.79 & 0.72 & 0.77 & 0.78 & 0.70 & 0.75 & 0.74 & 0.66 & 0.72 \\
			\multicolumn{1}{ |c| }{}  & RDP-means  & 0.96 & 0.96 & 0.95 & \textbf{0.99} & \textbf{0.99} & \textbf{0.98} & \textbf{0.98} & \textbf{0.97} & \textbf{0.94} & \textbf{0.91} & \textbf{0.87} & \textbf{0.82} \\
			\multicolumn{1}{ |c| }{} & MPCKMeans  & \textbf{1.00} & \textbf{1.00} & \textbf{0.99} & 0.44 & 0.12 & 0.13 & 0.41 & 0.08 & 0.09 & 0.38 & 0.03 & 0.05 \\
			\multicolumn{1}{ |c| }{} & LCVQE  & 0.88 & 0.84 & 0.86 & 0.60 & 0.42 & 0.47 & 0.55 & 0.34 & 0.41 & 0.50 & 0.25 & 0.34 \\
			\hline 
		\end{tabular}
	}
	}
	\caption{This table is an expanded version of Table \ref{tab:resultPerNoiseRate} and shows how the algorithms perform under different levels of noise and for each constraint sampling rate, averaged over each UCI dataset.}
	\label{tab:resultPerSamplingRate}
\end{table}

\textbf{Effect of deviation from true number of clusters: } Most of the algorithms we analyze are dependent on the true number of clusters, which is usually unknown in practice. Here, we investigate the sensitivity of those algorithms to perturbations in the true value of $k$. The DP-means algorithm of \cite{jiang2012small} is said to be less sensitive to the choice of $k$, since its parameter has weaker dependence on the choice of $k$. Similarly, since RDP-means is derived from DP-means, it is expected that it too would be relatively robust to deviations from the actual $k$. 

For all algorithms and for each dataset, we set $p = 1$ and $r = 0.03$ and vary the number of clusters to $k - deviation$ where $deviation \in \{ \pm 3, \pm 2, \pm 1, 0\}$.\footnote{For some datasets where $k= 3$, we dropped the value $deviation = 3$. Also the implementation of LCVEQ that we used needs at least 2 clusters to work. We are not aware of a more general available implementation for this algorithm.} The results are shown in Figure \ref{fig:simulated_data:full:deviation}. The $x$-axis shows the value of $deviation$ and the $y$ axis shows the value of $F$-measure. 

Notice the performance of DP-means is clearly stable for different choices of $k$, which supports the claim made in \cite{jiang2012small}. Similarly RDP-means and TVClust show very stable results. MPCKmeans generally works well unless $k$ is underestimated.


\begin{figure}[h]
	\centering
	\includegraphics[trim=4.1cm 9.5cm 4.8cm 9.55cm, scale=0.41, clip=true]{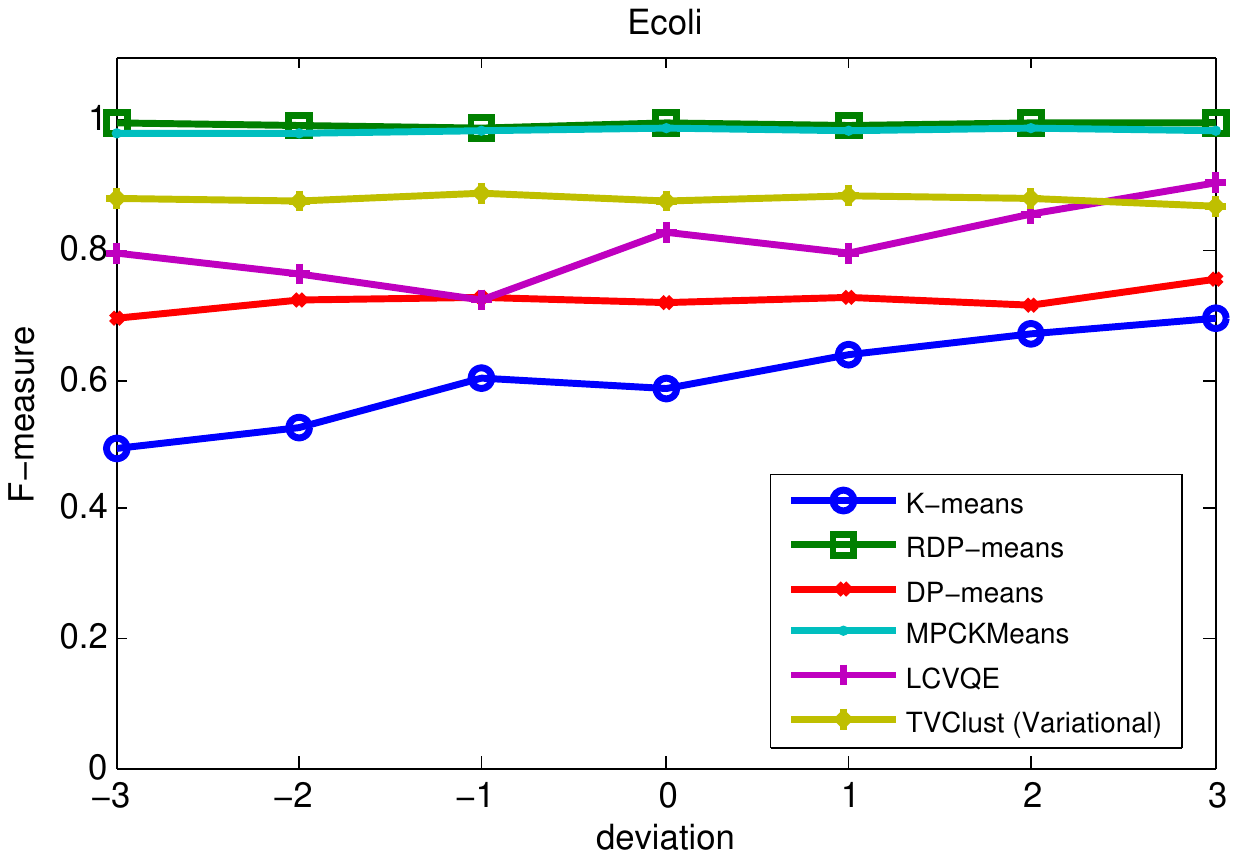}
	\includegraphics[trim=4.1cm 9.5cm 4.8cm 9.55cm, scale=0.41, clip=true]{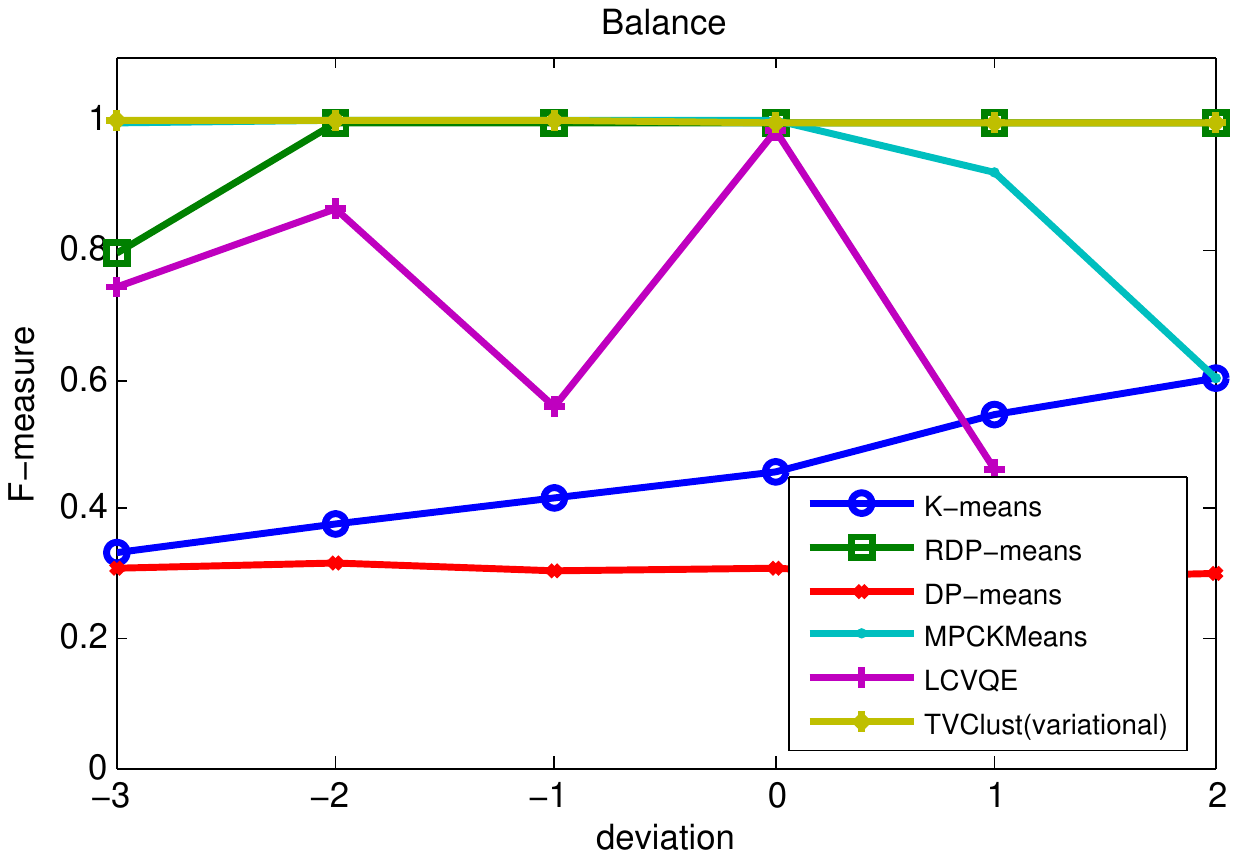}
	\includegraphics[trim=4.1cm 9.5cm 4.8cm 9.55cm, scale=0.41, clip=true]{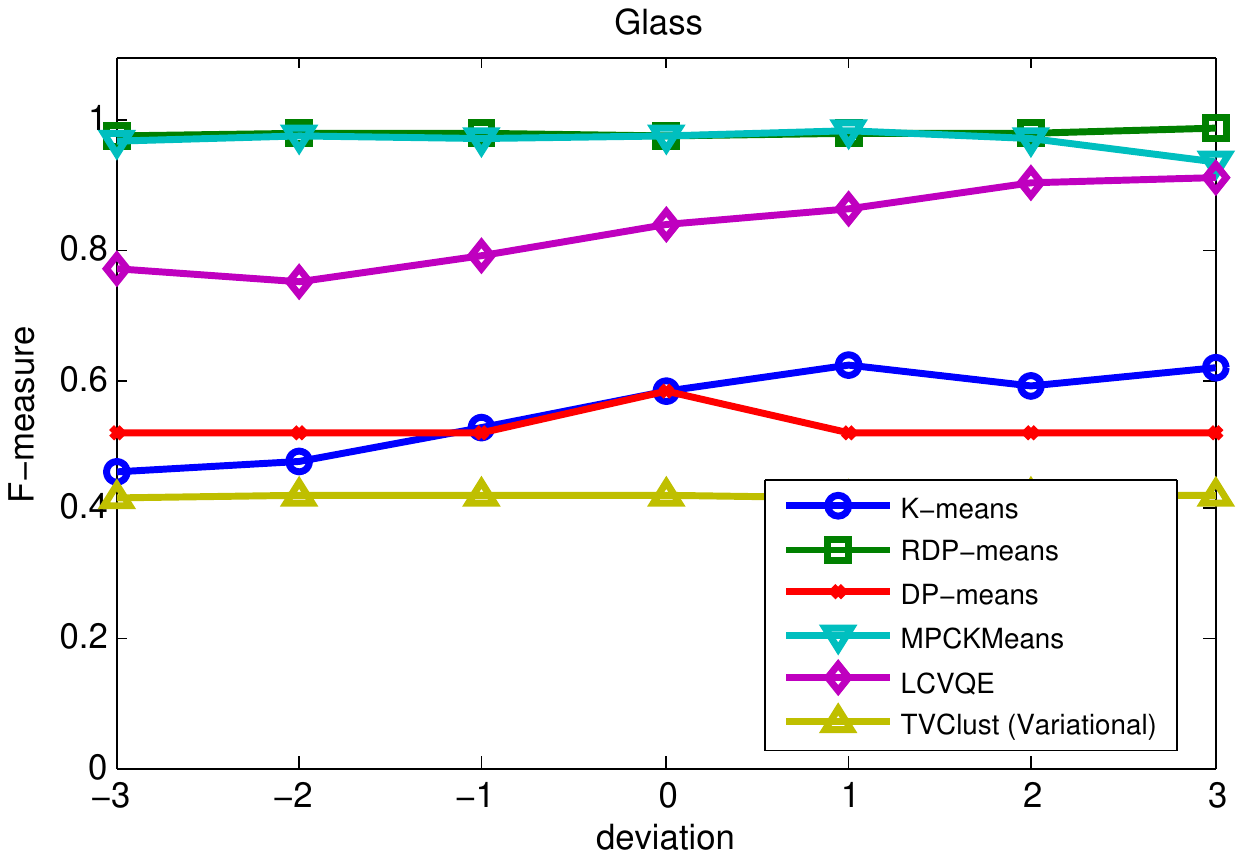}
	\includegraphics[trim=4.1cm 9.5cm 4.8cm 9.55cm, scale=0.41, clip=true]{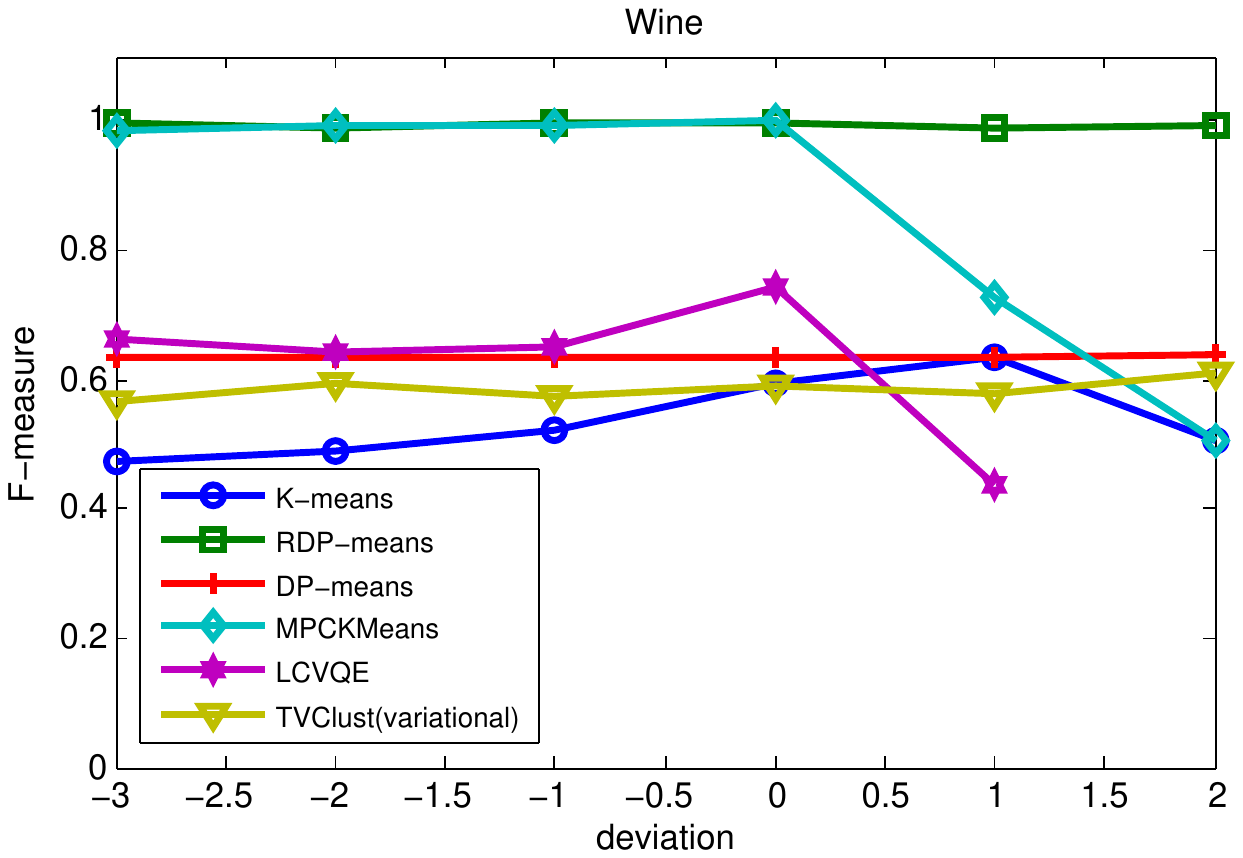}
	\includegraphics[trim=4.1cm 9.5cm 4.8cm 9.55cm, scale=0.41, clip=true]{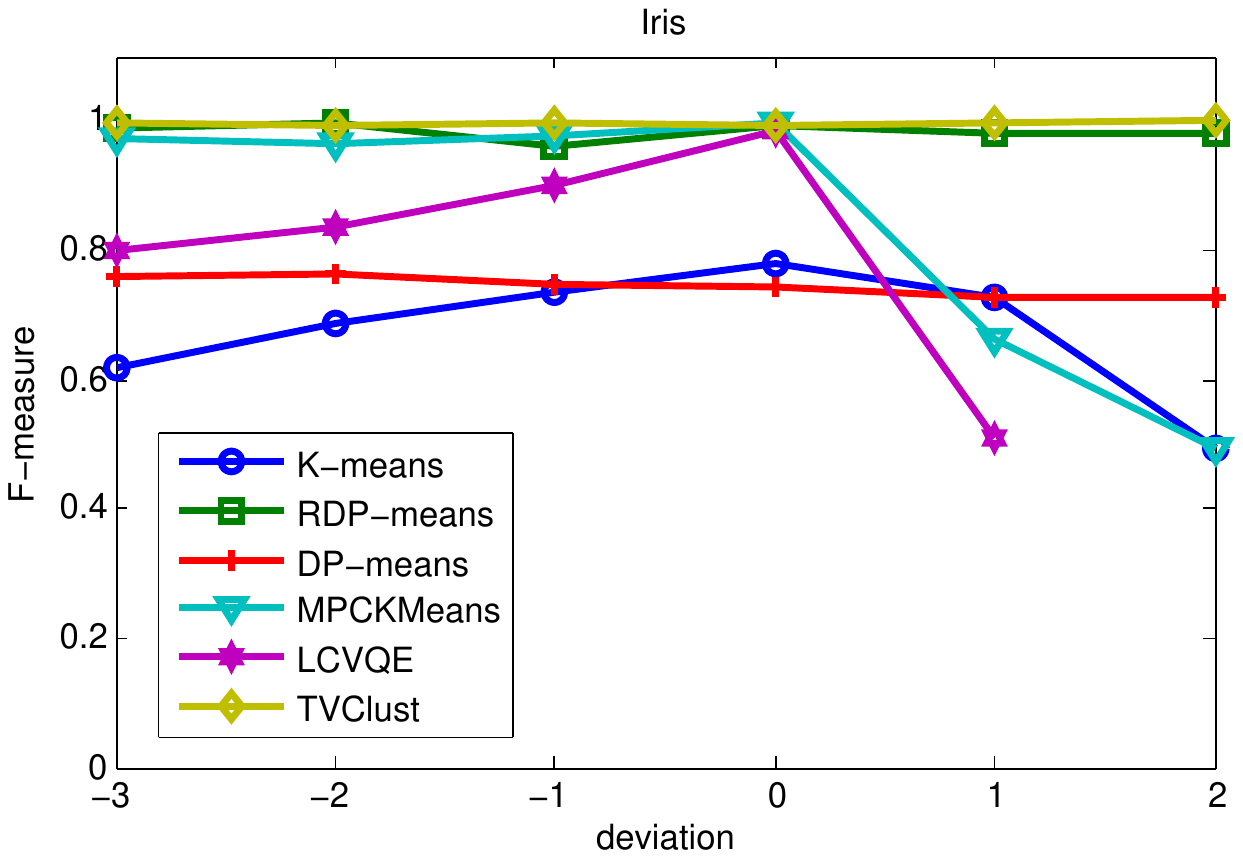}
	\vspace{-0.15cm}
	\caption{Comparison of clustering quality on each UCI datasets with deviations from the actual number of clusters. The $x$-axis shows $deviation$, where the number of clusters declared to each algorithm is $k - deviation$. The $y$-axis shows the $F$-measure evaluation of each clustering result. }
	\label{fig:simulated_data:full:deviation}
\end{figure}

\subsection{ImageNet Clustering}
We repeat the same experiment from \cite{jiang2012small}, where 100 images from 10 different categories of the ImageNet data were sampled.\footnote{The set of images from each clusters, and the extracted SIFT features are available at \url{http://image-net.org}.}  Each image was processed via standard visual-bag-of-words where SIFT was applied to images patches and the resulting SIFT vectors were mapped into 1000 visual works. The SIFT feature counts were then used as features for that image, and since these features are discrete counts, they were modeled as if coming from a multinomial distribution. Thus we used the corresponding divergence measure, i.e. KL-divergence (as opposed to Euclidean distance in the Gaussian case) as the distance metric in the clustering. 

We use Laplace smoothing,\footnote{See \url{http://en.wikipedia.org/wiki/Additive_smoothing}.} with a smoothing parameter of 0.3 to remove the ill-conditioning (division by zero inside the KL divergence). We also include the clustering results using a Gaussian model, to show the importance of choosing the appropriate distribution. The result of the evaluation are in Table \ref{tab:imageNet}. Clearly RDP-means has the best result as it makes use of the side information.

We also investigate the behavior of RDP-means as a function of the percentage of pairs sampled. The result is depicted in the Figure \ref{fig:RDPmeans}. In the case when the rate is close to zero, the model is equivalent to DP-means. The figure shows that, as we add more constraints the performance of the model consistently increases. When we sample only 6\% of the pairs, we are able to almost fully reconstruct the true clustering without any loss of information. 

\begin{table*}
	\centering
	{\scriptsize
		\hspace*{-0.2cm}\begin{tabular}{c|c|c|c|c|c|c|}
			\cline{2-7}
			& \multicolumn{2}{|c|}{DP-Means} & \multicolumn{2}{|c|}{K-Means} & \multicolumn{2}{|c|}{RDP-Means}  \\
			\cline{2-7} 
			& Gaussian & Multinomial & Gaussian & Multinomial & Gaussian & Multinomial \\ 
			\hline
			\multicolumn{1}{ |c| }{\tiny $F$-measure} &    0.18  &  0.22  &  0.18 &  0.25   &   0.20   & \textbf{ 0.44 }  \\ 
			\hline 
		\end{tabular}
	}
	\caption{Results of clustering on ImageNet dataset. For each measure, the result is averaged over 10 runs.}
	\label{tab:imageNet}
\end{table*}

\begin{figure*}
	\centering
	\includegraphics[trim=1cm 9.4cm 1cm 9.8cm, clip=true, scale=0.7]{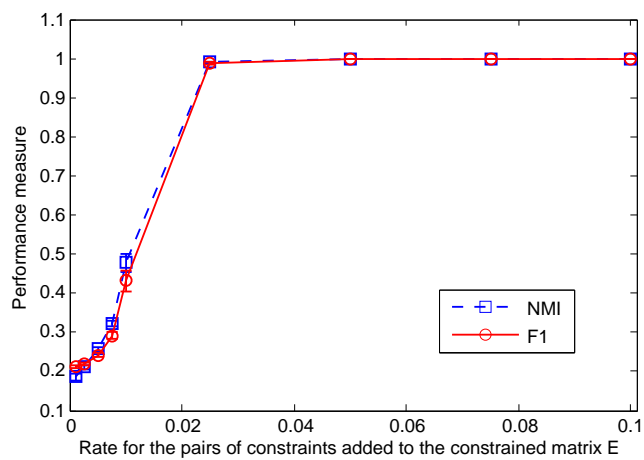}
	\caption{Effect of adding side information to the performance of RDP-means. As more information it added, performance improves.}
	\label{fig:RDPmeans}
\end{figure*}



\section*{Acknowledgments}
The authors would like to thank Ke Jiang for providing the the data used in the UCI image clustering. We also thank Daphne Tsatsoulis, Eric Horn, Shyam Upadhyay, Adam Volrath and Stephen Mayhew for helpful comments on the draft. 

\bibliographystyle{icml2015}
\bibliography{ref}

\begin{thebibliography}{30}
\providecommand{\natexlab}[1]{#1}
\providecommand{\url}[1]{\texttt{#1}}
\expandafter\ifx\csname urlstyle\endcsname\relax
  \providecommand{\doi}[1]{doi: #1}\else
  \providecommand{\doi}{doi: \begingroup \urlstyle{rm}\Url}\fi

\bibitem[Aldous(1983)]{aldous1983random}
Aldous, David.
\newblock Random walks on finite groups and rapidly mixing markov chains.
\newblock In \emph{S{\'e}minaire de Probabilit{\'e}s XVII 1981/82}, pp.\
  243--297. Springer, 1983.

\bibitem[Antoniak(1974)]{antoniak1974mixtures}
Antoniak, Charles~E.
\newblock Mixtures of dirichlet processes with applications to bayesian
  nonparametric problems.
\newblock \emph{The annals of statistics}, pp.\  1152--1174, 1974.

\bibitem[Banerjee et~al.(2005)Banerjee, Merugu, Dhillon, and
  Ghosh]{banerjee2005clustering}
Banerjee, Arindam, Merugu, Srujana, Dhillon, Inderjit~S, and Ghosh, Joydeep.
\newblock Clustering with bregman divergences.
\newblock \emph{The Journal of Machine Learning Research}, 6:\penalty0
  1705--1749, 2005.

\bibitem[Basu et~al.(2008)Basu, Davidson, and Wagstaff]{basu2008constrained}
Basu, Sugato, Davidson, Ian, and Wagstaff, Kiri.
\newblock \emph{Constrained clustering: Advances in algorithms, theory, and
  applications}.
\newblock CRC Press, 2008.

\bibitem[Bilenko et~al.(2004)Bilenko, Basu, and Mooney]{bilenko2004integrating}
Bilenko, Mikhail, Basu, Sugato, and Mooney, Raymond~J.
\newblock Integrating constraints and metric learning in semi-supervised
  clustering.
\newblock In \emph{Proceedings of the twenty-first international conference on
  Machine learning}, pp.\ ~11. ACM, 2004.

\bibitem[Blackwell \& MacQueen(1973)Blackwell and MacQueen]{blackwellm73}
Blackwell, D. and MacQueen, J.
\newblock Ferguson distributions via {P}olya urn schemes.
\newblock \emph{The Annals of Statistics}, 1:\penalty0 353--355, 1973.

\bibitem[Blum \& Mitchell(1998)Blum and Mitchell]{Blum:1998}
Blum, Avrim and Mitchell, Tom.
\newblock Combining labeled and unlabeled data with co-training.
\newblock In \emph{Proceedings of the Eleventh Annual Conference on
  Computational Learning Theory}, COLT' 98, pp.\  92--100, New York, NY, USA,
  1998. ACM.
\newblock ISBN 1-58113-057-0.
\newblock \doi{10.1145/279943.279962}.
\newblock URL \url{http://doi.acm.org/10.1145/279943.279962}.

\bibitem[Bregman(1967)]{bregman1967relaxation}
Bregman, Lev~M.
\newblock The relaxation method of finding the common point of convex sets and
  its application to the solution of problems in convex programming.
\newblock \emph{USSR computational mathematics and mathematical physics},
  7\penalty0 (3):\penalty0 200--217, 1967.

\bibitem[Covoes et~al.(2013)Covoes, Hruschka, and Ghosh]{covoes2013study}
Covoes, Thiago~F, Hruschka, Eduardo~R, and Ghosh, Joydeep.
\newblock A study of k-means-based algorithms for constrained clustering.
\newblock \emph{Intelligent Data Analysis}, 17\penalty0 (3):\penalty0 485--505,
  2013.

\bibitem[Erd\"{o}s \& R\'{e}nyi(1959)Erd\"{o}s and R\'{e}nyi]{erdos59}
Erd\"{o}s, P. and R\'{e}nyi, A.
\newblock {On random graphs, I}.
\newblock \emph{Publicationes Mathematicae (Debrecen)}, 6:\penalty0 290--297,
  1959.
\newblock URL \url{http://www.renyi.hu/\~{}p\_erdos/Erdos.html\#1959-11}.

\bibitem[Ferguson(1973)]{ferguson1973bayesian}
Ferguson, Thomas~S.
\newblock A bayesian analysis of some nonparametric problems.
\newblock \emph{The annals of statistics}, pp.\  209--230, 1973.

\bibitem[Finley \& Joachims(2005)Finley and Joachims]{finley2005supervised}
Finley, Thomas and Joachims, Thorsten.
\newblock Supervised clustering with support vector machines.
\newblock In \emph{Proceedings of the 22nd international conference on Machine
  learning}, pp.\  217--224. ACM, 2005.

\bibitem[Forster \& Warmuth(2002)Forster and Warmuth]{forster2002relative}
Forster, J{\"u}rgen and Warmuth, Manfred~K.
\newblock Relative expected instantaneous loss bounds.
\newblock \emph{Journal of Computer and System Sciences}, 64\penalty0
  (1):\penalty0 76--102, 2002.

\bibitem[Fraley \& Raftery(2002)Fraley and Raftery]{fraley2002model}
Fraley, Chris and Raftery, Adrian~E.
\newblock Model-based clustering, discriminant analysis, and density
  estimation.
\newblock \emph{Journal of the American Statistical Association}, 97\penalty0
  (458):\penalty0 611--631, 2002.

\bibitem[Jiang et~al.(2012)Jiang, Kulis, and Jordan]{jiang2012small}
Jiang, Ke, Kulis, Brian, and Jordan, Michael.
\newblock Small-variance asymptotics for exponential family dirichlet process
  mixture models.
\newblock In \emph{Advances in Neural Information Processing Systems 25}, pp.\
  3167--3175, 2012.

\bibitem[Jin et~al.(2013)Jin, Luo, Yu, Wang, Joshi, and Han]{jin2013reinforced}
Jin, Xin, Luo, Jiebo, Yu, Jie, Wang, Gang, Joshi, Dhiraj, and Han, Jiawei.
\newblock Reinforced similarity integration in image-rich information networks.
\newblock \emph{Knowledge and Data Engineering, IEEE Transactions on},
  25\penalty0 (2):\penalty0 448--460, 2013.

\bibitem[Khoreva et~al.(2014)Khoreva, Galasso, Hein, and
  Schiele]{khoreva2014learning}
Khoreva, Anna, Galasso, Fabio, Hein, Matthias, and Schiele, Bernt.
\newblock Learning must-link constraints for video segmentation based on
  spectral clustering.
\newblock In \emph{Pattern Recognition}, pp.\  701--712. Springer, 2014.

\bibitem[Kulis \& Jordan(2011)Kulis and Jordan]{kulis2011revisiting}
Kulis, Brian and Jordan, Michael~I.
\newblock Revisiting k-means: New algorithms via bayesian nonparametrics.
\newblock \emph{arXiv preprint arXiv:1111.0352}, 2011.

\bibitem[Neal(2000)]{neal2000markov}
Neal, Radford~M.
\newblock Markov chain sampling methods for dirichlet process mixture models.
\newblock \emph{Journal of computational and graphical statistics}, 9\penalty0
  (2):\penalty0 249--265, 2000.

\bibitem[Ng et~al.(2002)Ng, Jordan, Weiss, et~al.]{ng2002spectral}
Ng, Andrew~Y, Jordan, Michael~I, Weiss, Yair, et~al.
\newblock On spectral clustering: Analysis and an algorithm.
\newblock \emph{Advances in neural information processing systems}, 2:\penalty0
  849--856, 2002.

\bibitem[Orbanz \& Buhmann(2008)Orbanz and Buhmann]{orbanz2008nonparametric}
Orbanz, Peter and Buhmann, Joachim~M.
\newblock Nonparametric bayesian image segmentation.
\newblock \emph{International Journal of Computer Vision}, 77\penalty0
  (1-3):\penalty0 25--45, 2008.

\bibitem[Pelleg \& Baras(2007)Pelleg and Baras]{pelleg2007k}
Pelleg, Dan and Baras, Dorit.
\newblock K-means with large and noisy constraint sets.
\newblock In \emph{Machine Learning: ECML 2007}, pp.\  674--682. Springer,
  2007.

\bibitem[Rangapuram \& Hein(2012)Rangapuram and
  Hein]{rangapuram2012constrained}
Rangapuram, Syama~S and Hein, Matthias.
\newblock Constrained 1-spectral clustering.
\newblock In \emph{International Conference on Artificial Intelligence and
  Statistics}, pp.\  1143--1151, 2012.

\bibitem[Shi \& Malik(2000)Shi and Malik]{shi2000normalized}
Shi, Jianbo and Malik, Jitendra.
\newblock Normalized cuts and image segmentation.
\newblock \emph{Pattern Analysis and Machine Intelligence, IEEE Transactions
  on}, 22\penalty0 (8):\penalty0 888--905, 2000.

\bibitem[Tierney \& Kadane(1986)Tierney and Kadane]{tierney1986accurate}
Tierney, Luke and Kadane, Joseph~B.
\newblock Accurate approximations for posterior moments and marginal densities.
\newblock \emph{Journal of the American Statistical Association}, 81\penalty0
  (393):\penalty0 82--86, 1986.

\bibitem[Wagstaff \& Cardie(2000)Wagstaff and Cardie]{wagstaff2000clustering}
Wagstaff, Kiri and Cardie, Claire.
\newblock Clustering with instance-level constraints.
\newblock In \emph{AAAI/IAAI}, pp.\  1097, 2000.

\bibitem[Wagstaff et~al.(2001)Wagstaff, Cardie, Rogers, Schr{\"o}dl,
  et~al.]{wagstaff2001constrained}
Wagstaff, Kiri, Cardie, Claire, Rogers, Seth, Schr{\"o}dl, Stefan, et~al.
\newblock Constrained k-means clustering with background knowledge.
\newblock In \emph{ICML}, volume~1, pp.\  577--584, 2001.

\bibitem[Wang \& Davidson(2010)Wang and Davidson]{wang2010flexible}
Wang, Xiang and Davidson, Ian.
\newblock Flexible constrained spectral clustering.
\newblock In \emph{Proceedings of the 16th ACM SIGKDD international conference
  on Knowledge discovery and data mining}, pp.\  563--572. ACM, 2010.

\bibitem[Zha et~al.(2001)Zha, He, Ding, Gu, and Simon]{zha2001spectral}
Zha, Hongyuan, He, Xiaofeng, Ding, Chris, Gu, Ming, and Simon, Horst~D.
\newblock Spectral relaxation for k-means clustering.
\newblock In \emph{Advances in neural information processing systems}, pp.\
  1057--1064, 2001.

\bibitem[Zhu et~al.(2011)Zhu, Chen, and Xing]{zhu2011infinite}
Zhu, Jun, Chen, Ning, and Xing, Eric~P.
\newblock Infinite svm: a dirichlet process mixture of large-margin kernel
  machines.
\newblock In \emph{Proceedings of the 28th International Conference on Machine
  Learning (ICML-11)}, pp.\  617--624, 2011.

\end{thebibliography}

\end{document}